\pdfoutput=1
\documentclass{article}
\usepackage{iclr2027_conference,times}
\usepackage{amsmath,amsfonts,bm}

\def\eqref#1{equation~\ref{#1}}
\def\1{\bm{1}}

\DeclareMathAlphabet{\mathsfit}{\encodingdefault}{\sfdefault}{m}{sl}
\SetMathAlphabet{\mathsfit}{bold}{\encodingdefault}{\sfdefault}{bx}{n}

\usepackage{wrapfig}
\usepackage{url}
\usepackage{graphicx}
\usepackage{float}
\usepackage{placeins}
\usepackage{amsthm}
\usepackage[ruled,vlined,linesnumbered]{algorithm2e}
\newtheorem{assumption}{Assumption}
\newtheorem{theorem}{Theorem}
\usepackage{booktabs}
\usepackage{enumerate,enumitem}
\usepackage{makecell}

\usepackage{etoc}
\usepackage{hyperref}

\setlist[itemize]{leftmargin=*}
\setlist[enumerate]{leftmargin=*}

\title{Early Signatures of Memorization in Diffusion Models via Basin Geometry and Cyclic Denoising}

\author{
\textbf{Nikhil Verma}$^{1,**}$, 
\textbf{Siddharthan Dileep}$^{1,**}$, 
\textbf{Anoop Singh}$^{2}$ \\
\textbf{Srikanth Sastry}$^{3}$, 
\textbf{Ramya Hebbalaguppe}$^{4}$, 
\textbf{Sayan Ranu}$^{1,2}$, 
\textbf{N. M. Anoop Krishnan}$^{1,5}$ \\
$^{1}$Yardi School of Artificial Intelligence, Indian Institute of Technology Delhi \\
$^{2}$Department of Computer Science and Engineering, Indian Institute of Technology Delhi \\
$^{3}$Jawaharlal Nehru Centre for Advanced Scientific Research \\
$^{4}$ TCS Research Labs\\
$^{5}$Department of Civil and Environmental Engineering, Indian Institute of Technology Delhi \\
$^{**}$Equal contribution
}
\iclrfinalcopy

\begin{document}
\maketitle
\lhead{Preprint}
\etocdepthtag.toc{main}
\begin{abstract}

Diffusion models generalize early in training and later reproduce individual training samples. Standard tests detect memorization only once one-shot generation already produces near-copies, which leaves a released model unaudited until its outputs fail. We show that memorization is encoded in the geometry of the learned energy landscape before it appears in generated samples, a state we call \emph{latent memorization}. Using score divergence and basin volume, we find that localized basins form around individual training samples and separate them from held-out samples before the first memorized sample appears, with an onset that follows the same $O(n)$ scaling as the memorization time. We probe these basins with cyclic denoising, the repeated application of partial noising followed by denoising. Under the exact empirical score, we prove that cycling started near an isolated training sample recovers it and returns to it over any finite number of cycles with high probability, with a bound controlled by the cycling noise and the separation from competing samples. In trained models, cycling recovers training images from CelebA and CIFAR-10 checkpoints whose one-shot samples contain no copies, and at a CelebA checkpoint with 0.1\% one-shot copies, 500 cycles raise the memorized fraction above 30\%. Cycling also reveals degenerate attractors that match no single training image. They are prevalent early in training and fade as training proceeds, so residence in a basin does not by itself imply memorization. These findings hold on a Gaussian mixture, CelebA, and CIFAR-10 across optimizers, architectures, noise schedules, and training-set sizes. They also extend to off-the-shelf Stable Diffusion v1.4, where the cycled conditional--unconditional divergence gap separates memorized from non-memorized prompts with an AUC of $0.944$ and a TPR of $0.866$ at $1\%$ FPR. Altogether, probing the learned landscape uncovers memorization that eludes one-shot generation. More broadly, what a diffusion model has memorized is a property of the geometry and stability of its learned distribution, and assessing it requires examining this structure rather than generated outputs alone.

\end{abstract}

\section{Introduction}
\begin{figure*}[t]
\vspace{-0.3in}
  \centering
  \includegraphics[width=\textwidth]{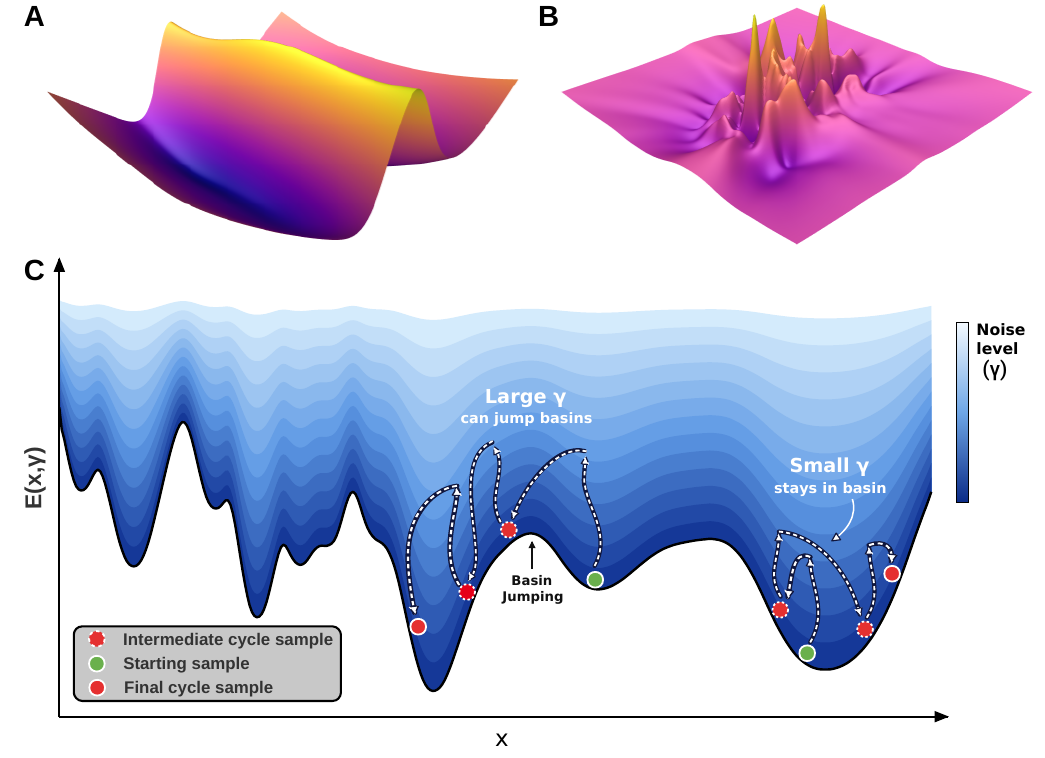}
    \vspace{-0.2in}
    \caption{%
    \textbf{Memorization forms sharp, localized basins in the learned score field.}
(\textbf{A,B}) Normalized score divergence $\widehat{\mathcal{A}}_1$ for an eight-dimensional Gaussian mixture model (GMM), on a two-dimensional slice of the sample space obtained with the projection of \citet{bihani2024low} (App.~\ref{app:landscape-viz}).
(\textbf{A}) In the generalized regime, the field is smooth, with no prominent structure around training samples.
(\textbf{B}) As memorization emerges, strongly negative regions form around training samples.
(\textbf{C}) Schematic of cyclic denoising: at low noise, trajectories remain within a basin, whereas at sufficiently large noise they jump to neighbouring basins.}
  \vspace{-0.2in}
  \label{fig:landscape_cyclic}
\end{figure*}

Diffusion models (DMs) generate high-quality and diverse samples by learning to reverse a gradual noising process \citep{pmlr-v37-sohl-dickstein15,hoDenoisingDiffusionProbabilistic2020,songScoreBasedGenerativeModeling2020}. The noising process is a stochastic differential equation (SDE) that transforms data into Gaussian noise, and its time reversal requires the \textit{score}, the gradient of the log-density of the noised data. A neural network trained to approximate this score acts as a force field that guides samples from noise back to data. The training objective that enables generalization also admits memorization. For a finite training set, the loss-minimizing score is the score of the training data convolved with Gaussian noise, and exact sampling with this score drives every trajectory onto a training point as the noise vanishes \citep{guMemorizationDiffusionModels2023}. At this optimum, the model reproduces only its training data. Practical DMs do not reach this optimum immediately.
They first generate novel samples and begin to reproduce individual training examples only after further training~\citep{bonnaireWhyDiffusionModels2025}.
Such reproduction has been documented in large image DMs \citep{10203298,somepalliUnderstandingMitigatingCopying2023}, and generate-and-filter attacks extract training images from deployed models \citep{3620237.3620531}. Memorization therefore has direct consequences for privacy, copyright, and the protection of sensitive data.

Detecting memorization in a released model is therefore a central problem for auditing DMs. A released model is a single checkpoint, usually without its training curves and often without its training set. The standard test generates samples and counts near-copies of training images, and this test registers memorization only once one-shot generation already reproduces the training set. A model may, however, encode individual training samples before any of them appears in its outputs. We call this state \emph{latent memorization}: the model has formed structure specific to individual training samples, yet one-shot generation produces no copies. Detecting it early, in the auditing sense, means detecting it from the model in hand, before copies appear in generated samples and without access to the training curve.

\textbf{Related Work.}
The generalization-to-memorization transition is well-characterized along two axes, model capacity and training-time, for models whose training is analyzed. Along the first, DMs generate novel samples when their effective capacity is insufficient to reproduce the training set \citep{yoon2023diffusion}.
Along the second, \citet{bonnaireWhyDiffusionModels2025} and \citet{faveroBiggerIsntAlways2025} identify a generalization time $\tau_{\mathrm{gen}}$, at which high-quality novel samples appear, and a memorization time $\tau_{\mathrm{mem}}$, beyond which samples reproduce training examples.
Since $\tau_{\mathrm{mem}}$ grows linearly with the number of training samples $n$, a generalization window $[\tau_{\mathrm{gen}},\tau_{\mathrm{mem}}]$ widens with $n$.
Training and test losses separate before $\tau_{\mathrm{mem}}$, but this signal requires the training curve. Energy-based analyses connect DMs to dense associative memories and show that, as the dataset grows, the learned landscape passes from attractors centred on individual training examples, through spurious attractors, to generalized states with vanishing basin volume
\citep{ambrogioniSearchDispersedMemories2023,phamMemorizationGeneralizationEmergence2025}.
Memorization is also local.
Isolated examples in sparse regions of data space are memorized first \citep{mergerLocalCoverageGoverns2026}, and generated samples drift towards the training set while the held-out loss still improves \citep{garnier-brunBiasedGeneralizationDiffusion2026}.

For trained models, geometric detectors measure the learned score through conditional--unconditional score differences, Hessian-based sharpness, directional anisotropy, or $p$-Laplacian estimates
\citep{wenDetectingExplainingMitigating2024,jeonUnderstandingMitigatingMemorization2025,asthanaDetectingMitigatingMemorization2026,brokmanIdentifyingMemorizationDiffusion2025,zotero-item-106}.
These detectors are validated on models and prompts whose memorization is already visible in generated samples.
Cyclic denoising (CD), the repeated application of partial noising followed by denoising, offers a dynamical probe of the same landscape.
\citet{kangSamplingDataChains2026} characterize how such chains mix across the data manifold at fixed noise, and \citet{sharmaCyclicDenoisingReveals2026} use CD to extract training images from fixed trained models (see App.~\ref{app:extended_rw} for further related work). 
Together, these works characterize memorization once it is visible in generated samples, or along axes that require access to training.
There is limited understanding on how the geometry underlying memorization develops before copies appear in generated samples, and about what CD reveals in that regime.
We ask whether memorization leaves geometric and dynamical signatures in the learned score before one-shot generation reproduces the training set, and to what extent these signatures can be read from the model alone.

\textbf{Contributions.}
In this work, we show that memorization is encoded in the geometry of the learned score before it is expressed by single-shot generation. We study this through the time-dependent energy landscape $E_t(x)=-\log p_t(x)$ learned by the model (Fig.~\ref{fig:landscape_cyclic}).
As training proceeds, this landscape develops localized basins around individual training samples. Within the generalization window, these basins capture little of the probability mass reached from pure noise, so one-shot sampling rarely lands in them. CD changes this picture. Each cycle perturbs a sample within a neighborhood set by the noise amplitude $\gamma$ and denoises it back, effectively leading to stable regions of the landscape characterized by local minima, for instance, a training-sample basin. Thus, repeated cycling exposes basins that a single draw may miss.
Our contributions are as follows.
\vspace{-0.05in}
\begin{itemize}
    \item \textbf{Geometric signatures of memorization.} Using score divergence and a recovery-based basin volume, evaluated on training, held-out, and generated samples across training, we show that training-sample basins form before one-shot generation reproduces those samples. The onset of these signatures scales with $\tau/n$, as does $\tau_{\mathrm{mem}}$.
    \item \textbf{Latent memorization.} Applying CD across checkpoints and noise amplitudes, we expose latent memorization inside the generalization window. CD raises the memorized fraction from at most $0.1\%$ to over $30\%$. CD also reveals degenerate attractors, stable states that match no single training image, which are prevalent near $\tau_{\mathrm{gen}}$ and vanish as training proceeds.
    \item \textbf{Theoretical proof.} Under exact stochastic reversal with the empirical score, we derive a lower bound on the probability that CD repeatedly returns to a training point, controlled by the cycling noise and by the separation of that point from competing training points.
    \item \textbf{Empirical evaluation.} We establish these signatures on a Gaussian mixture model (GMM), CelebA, and CIFAR-10 across two optimizers, three architectures, two noise schedules, and a range of training-set sizes, and show that CD detects memorized prompts in off-the-shelf Stable Diffusion v1.4 \citep{9878449} without retraining.

\end{itemize}

\vspace{-0.12in}
\section{Method}
\vspace{-0.1in}
Sampling from a DM \citep{hoDenoisingDiffusionProbabilistic2020,songScoreBasedGenerativeModeling2020} simulates probability-flow or Langevin dynamics governed by the learned score field $s_\theta(x_t,t)$, where $\theta$ denotes the learned parameters (see App.~\ref{app:prelim-diff} for the preliminaries). This score defines an energy landscape through the time-dependent energy $E_t(x)=-\log p_t(x)$, since $s_t(x)=-\nabla_x E_t(x)$. Because the score points toward increasing probability, it also points toward decreasing energy, and high-probability regions are low-energy regions. Each local minimum of $E_t$ defines the centre of a \emph{basin}, the set of points that flow toward it. In the memorization regime, individual training samples can form its own low-energy basins~\citep{phamMemorizationGeneralizationEmergence2025,ambrogioniSearchDispersedMemories2023}, which makes basin geometry a natural lens on memorization. In the remainder of the paper, we use this energy-landscape view to study memorization through the landscape learned by the model, $E_{t,\theta}(x)=-\log p_{t,\theta}(x)$.

\subsection{Geometric Signatures of the Energy Landscape}
\vspace{-0.1in}
\label{sec:geom-sig}
We characterize the learned landscape by two basin-level signatures, evaluated on training, held-out, and generated samples: score divergence $\mathcal{A}(x,t)$ and basin volume. The score divergence measures how sharply the landscape contracts around a point, and the basin volume measures the size of the neighborhood from which denoising returns to it. Evaluating them on training samples requires the training set, whereas evaluating them on generated and held-out samples requires only the model and reference data from the same distribution.

\textbf{Score divergence and its estimation.}
Near a locally convex energy minimum, the learned score
$\boldsymbol{s_\theta(x,t)}$ can draw nearby points toward the basin
centre. We characterize this net local contraction by the
divergence \textbf{$\nabla_x\cdot s_\theta(x,t)$}. For an exact score,
this divergence equals
$\operatorname{tr}(\nabla_x^2\log p_t(x))
=-\operatorname{tr}(\nabla_x^2E_t(x))$,
the negative sum of the local energy curvatures. A more
negative value near a basin centre therefore indicates
stronger local concentration. If training forms localized
basins around individual examples, the divergence near
training samples should therefore fall below that near
held-out samples before one-shot generation reproduces
the training set. For the reported measurements we use
the boundary Monte Carlo method of
\citet{brokmanIdentifyingMemorizationDiffusion2025}.
It computes the unscaled mean outward score flux on a
ball of radius $R_{\mathrm p}$ around the forward-perturbed
sample $x_t$. We denote this reported estimate by
$\boldsymbol{\mathcal A(x,t)}$ and call it
\emph{score divergence} for convenience. It estimates the
divergence averaged over the ball up to the known positive
factor $R_{\mathrm p}/d$, as detailed in
App.~\ref{app:score-divergence}. Because this estimate depends on both the magnitude and
the direction of the boundary scores, we isolate direction
through the \emph{normalized score divergence}
$\boldsymbol{\widehat{\mathcal A}(x,t)}$, the mean outward
alignment of the unit-normalized boundary scores. We use
this name for convenience. $\widehat{\mathcal A}$ lies in
$[-1,1]$ when the sampled score norms are nonzero, and
negative values indicate predominantly inward-pointing
scores. Unless stated otherwise, we evaluate both
statistics at the final denoising step ($t=1$) and denote
them by $\boldsymbol{\mathcal A_1}$ and
$\boldsymbol{\widehat{\mathcal A}_1}$.

\begin{figure*}[t]
\vspace{-0.2in}
\centering
\includegraphics[width=\textwidth]{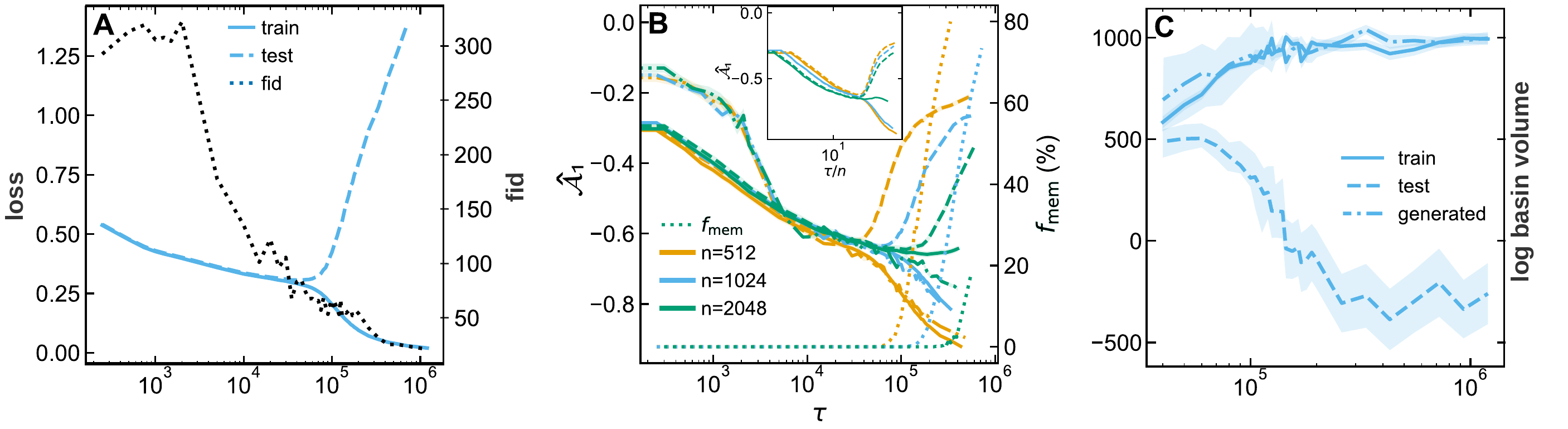}
\vspace{-0.3in}
\caption{\textbf{Geometric signatures across training on CelebA.}
(\textbf{A}) Training loss (solid), held-out loss (dashed), and FID (dotted, right axis) for $n=1024$.
(\textbf{B}) $\widehat{\mathcal{A}}_1$ at $\gamma=0.001$ and $f_{\mathrm{mem}}$ (dotted, right axis) for $n=512$ (orange), $1024$ (blue), and $2048$ (green); training (solid), held-out (dashed), and generated (dash-dot) samples. \emph{Inset:} divergence against $\tau/n$.
(\textbf{C}) Mean log basin volume for $n=1024$ over all training samples and an equal number of generated and held-out samples, with line styles as in (B); bands show one standard deviation across samples. Refer Appendix~\ref{app:additional-results} for equivalent results on Adam Optimizer}
\vspace{-0.2in}
\label{fig:training-dynamics}
\end{figure*}

\textbf{Basin volume.} We estimate the basin volume with the DDIM recovery sweep of \citet{phamMemorizationGeneralizationEmergence2025}. For a target sample $\hat{x}$, let $\gamma_c$ be the critical noise amplitude (see Section~\ref{sec:cyclic-probe}) from which deterministic DDIM denoising recovers $\hat{x}$ within a tolerance $\epsilon_{\mathrm{rec}}$, and let $x_{\gamma_c}$ be the perturbed input at that amplitude.
The distance $r_{\mathrm{rec}}(\hat{x})=\|\hat{x}-x_{\gamma_c}\|_2$ defines a recovery radius, and the basin volume is the volume of a $d$-dimensional ball with this radius,
$\boldsymbol{V_{\mathrm{basin}}(\hat{x})=\frac{\pi^{d/2}}{\Gamma(d/2+1)}r_{\mathrm{rec}}(\hat{x})^d}$.
$V_{\mathrm{basin}}$ is thus an isotropic proxy for the recovery volume and does not equal the exact basin volume. 

\vspace{-0.1in}
\subsection{Cyclic Denoising as an Energy Landscape Probe}
\vspace{-0.1in}
\label{sec:cyclic-probe}

The signatures above describe the landscape at individual points; CD probes how samples move through it. At cycle $k$, we noise the current sample to timestep $t$ with the forward marginal $\mathcal{Q}_t$ and denoise it back, $x^{(k)}\xrightarrow{\mathcal{Q}_t}x_t^{(k)}\xrightarrow{p_\theta}x^{(k+1)}$,where $p_\theta(x^{(k+1)}\mid x_t^{(k)})$ is the distribution induced by running the learned reverse process from $t$ to $0$. We parametrize each cycle by the normalized timestep $\boldsymbol{\gamma=t/T}$, which we call the \textit{cycling amplitude}.Because noise schedules vary in how they allocate noise over time, evaluating them at the same normalized timestep ($\gamma$) results in differing noise standard deviations ($\sigma_t$) (see App.~\ref{app:sigma-gamma}).

The cycling amplitude sets the scale of landscape exploration.
Small $\gamma$ produces predominantly local motion, while large $\gamma$ produces stronger perturbations that make transitions between basins more likely. A trajectory therefore tends to remain in its current basin while each perturbation stays within the region from which denoising returns it to that basin, and it escapes more often as perturbations grow (Fig.~\ref{fig:landscape_cyclic}C).To follow these dynamics, we track four quantities across cycles. The consecutive-cycle similarity $\boldsymbol{S_{\mathrm{cyc}}^{(k)}=\cos\!\bigl(\phi_{\mathrm{LPIPS}}(x^{(k)}),\phi_{\mathrm{LPIPS}}(x^{(k-1)})\bigr)}$, where $\phi_{\mathrm{LPIPS}}$ denotes the LPIPS feature representation \citep{Zhang_2018_CVPR}, measures whether a trajectory has settled.
We say that a trajectory resides in a basin from its residence onset, the first sustained crossing of $95\%$ of the rise in $S_{\mathrm{cyc}}^{(k)}$ from its early-cycle level to its late-cycle level, and we refer to the canonical image of that basin as its attractor. Beyond residence, we make this identification through the memorized fraction $\boldsymbol{f_{\mathrm{mem}}}$, which under CD is the fraction of trajectories whose state at cycle $k$ satisfies the same nearest-neighbour criterion used for one-shot generation.
When a trajectory resides in a basin whose attractor matches no single training image under this criterion, we call the attractor \emph{degenerate}. Finally, we track $\boldsymbol{\mathcal{A}_1}$ and $\boldsymbol{\widehat{\mathcal{A}}_1}$ along each trajectory to relate these dynamics to the local geometry of the basins it visits.

\vspace{-0.1in}
\subsection{Theoretical Analysis of Basin Residence Under Cyclic Denoising}
\vspace{-0.1in}
\label{sec:method-theory}

Our experiments suggest that once a cycling trajectory reaches the basin of a memorized training image, subsequent cycles can repeatedly recover that image. To isolate this mechanism, we analyze CD under exact stochastic reversal of the empirical training distribution, following the exact empirical-score idealization\citep{biroliDynamicalRegimesDiffusion2024}.
\begin{assumption}[Exact empirical reversal]
\label{assump:exact-score}
The clean distribution is $N^{-1}\sum_{j=1}^N\delta_{x_j}$ for distinct training points $x_1,\ldots,x_N\in\mathbb R^d$, with $N\ge2$. Each cycle noises its current state $x$ to $U=x+\sqrt q\,Z$, where $Z\sim\mathcal N(0,I)$ and the cycling variance $q>0$ is fixed, then samples a clean endpoint using the exact stochastic reverse process. Forward and reverse randomness are independent across cycles.
\end{assumption}

The additive-noise form of Assumption~\ref{assump:exact-score} is a rescaling of the variance-preserving forward process used in our experiments. Noising to timestep $t$ gives $x_t=\sqrt{\bar\alpha_t}\,x+\sqrt{1-\bar\alpha_t}\,Z$, and the rescaled state $U=x_t/\sqrt{\bar\alpha_t}$ equals $x+\sqrt q\,Z$ with $q=(1-\bar\alpha_t)/\bar\alpha_t$. The cycling variance is therefore set by the cycling amplitude through $q(\gamma)=(1-\bar\alpha_{\gamma T})/\bar\alpha_{\gamma T}$, and it increases with $\gamma$. Throughout, residence refers to the clean cycle endpoints; intermediate noisy states and reverse trajectories need not remain near the target. 

\begin{theorem}[Recovery and cyclic residence]
\label{thm:recovery}
Under Assumption~\ref{assump:exact-score}, fix a training point $x_i$, let $d_{ij}=\|x_i-x_j\|$, and choose $0<R<\frac14\min_{j\ne i}d_{ij}$. For $0\le r\le R$, define
\[
A_i(r)=\min\left\{1,\frac12\sum_{j\ne i}
\exp\left[-\frac{d_{ij}(d_{ij}-4r)}{8q}\right]\right\}.
\]
Let $h_i(x)$ be the probability that one cycle starting at $x$ ends at $x_i$. If $\|x-x_i\|\le r$, then $h_i(x)\ge1-A_i(r)$. Consequently, for $X_0=x$ with $\|x-x_i\|\le R$ and any integer $M\ge1$,
\begin{equation}
\mathbb P_x(X_1=\cdots=X_M=x_i)
=h_i(x)h_i(x_i)^{M-1}
\ge[1-A_i(R)][1-A_i(0)]^{M-1}.
\label{eq:cyclic-persistence}
\end{equation}
\end{theorem}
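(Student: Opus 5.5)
The argument rests on one observation: exact stochastic reversal of the empirical distribution, started from the noisy state $U$, returns a clean endpoint distributed as the posterior of the clean point given $U$. Because the forward and reverse processes of Assumption~\ref{assump:exact-score} form a joint law of (clean, noisy), running the exact reverse chain from $U$ samples $x_j$ with posterior weight
\[
w_j(U)=\frac{\exp(-\|U-x_j\|^2/2q)}{\sum_{k}\exp(-\|U-x_k\|^2/2q)}.
\]
Hence $h_i(x)=\mathbb E_Z[w_i(x+\sqrt q Z)]$. Taking the expectation of the complement,
\[
1-h_i(x)=\mathbb E\Big[\sum_{j\ne i}w_j(U)\Big]\le \sum_{j\ne i}\mathbb E\big[\min\{1,\exp(-(\|U-x_j\|^2-\|U-x_i\|^2)/2q)\}\big].
\]
Here I bounded $w_j\le w_j/w_i$ and capped each term at $1$; the overall $\min\{1,\cdot\}$ in $A_i$ is trivial since $1-h_i\le1$. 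The plan is to estimate each pairwise term separately.

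\textbf{Pairwise term.} Write $U=x_i+v$ with $v=(x-x_i)+\sqrt q Z$, and let $e=(x_j-x_i)/d_{ij}$. Then
\[
\|U-x_j\|^2-\|U-x_i\|^2=d_{ij}^2-2d_{ij}\langle v,e\rangle.
\]
The scalar $\langle v,e\rangle$ has law $\mathcal N(\mu,q)$ with $\mu=\langle x-x_i,e\rangle\le r$. The term therefore reduces to a one-dimensional quantity $\mathbb E[\min\{1,e^{-(d^2-2dG)/2q}\}]$ with $G\sim\mathcal N(\mu,q)$. This is monotone increasing in $\mu$, so it suffices to take $\mu=r$.

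\textbf{Bounding the one-dimensional expectation.} I would split at the event $G\ge d/4+r/2$ (or, more simply, at $G=(d+2r)/4$, i.e.\ the midpoint shifted). The Gaussian tail and the exponential on the complementary region each give roughly $\exp[-(d-2r)^2/8q]$-type factors. Alternatively, a Chernoff-style bound $\min\{1,e^{a}\}\le e^{\lambda a}$ with $\lambda=1/2$ gives
\[
\mathbb E\, e^{-(d^2-2dG)/4q}=\exp\Big[-\frac{d^2-2d\mu}{4q}+\frac{d^2 q}{8q^2}\Big]=\exp\Big[-\frac{d^2-4d\mu}{8q}\Big].
\]
This matches exactly $\exp[-d_{ij}(d_{ij}-4r)/8q]$ at $\mu=r$. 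The factor $\tfrac12$ in $A_i$ must come from a sharper step. Instead of $w_j\le w_j/w_i$, I would use $w_j\le \min\{1,w_j/w_i\}\cdot$ and exploit $\min\{1,e^a\}\le$ something. A cleaner route is to use $w_j\le \frac{w_j/w_i}{1+w_j/w_i}$ and the inequality $\frac{y}{1+y}\le \frac12\sqrt y$, which holds by AM-GM since $1+y\ge2\sqrt y$. This yields exactly $\frac12\,\mathbb E[e^{a/2}]$, and hence the displayed bound with the $\frac12$. So the first step is $w_j\le w_j/(w_i+w_j)\le\frac12\sqrt{w_j/w_i}$, then the Gaussian MGF computation. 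The condition $R<\frac14\min d_{ij}$ only ensures that the exponent is negative, so the bound is nontrivial.

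\textbf{Cyclic persistence.} Since each endpoint lies in the finite training set and cycles are driven by independent randomness, $(X_k)$ is a Markov chain with $\mathbb P(X_{k+1}=x_i\mid X_k=y)=h_i(y)$. By the Markov property, the probability of the event $X_1=\dots=X_M=x_i$ factorizes as $h_i(x)h_i(x_i)^{M-1}$. Applying the one-step bound with $r=R$ for the first factor and $r=0$ for the rest gives \eqref{eq:cyclic-persistence}, using that $A_i$ is nondecreasing in $r$.

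The main obstacle is justifying rigorously that exact stochastic reversal from $U$ samples the posterior $w_j(U)$. I would argue it via the time-reversal identity: the reverse SDE reproduces the forward path law conditioned on its endpoint, so its time-$0$ marginal given $U$ equals the forward conditional of $x_0$ given $U$.
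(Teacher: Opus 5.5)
Your final argument is correct and is essentially the paper's proof. It uses the same three steps: the bound $w_j\le L_j/(L_i+L_j)\le\frac12\sqrt{L_j/L_i}$ via $1+\tau\ge2\sqrt\tau$, the Gaussian moment identity combined with Cauchy--Schwarz to get $\exp[-d_{ij}(d_{ij}-4r)/(8q)]$, and the Markov factorization $h_i(x)h_i(x_i)^{M-1}$ with the bound applied at radii $R$ and $0$. Like you, the paper treats $\mathbb P(X'=x_j\mid U)=w_j(U)$ as a direct consequence of exact reversal via Bayes' rule, and the monotonicity of $A_i$ you invoke at the end is not actually needed.
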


\paragraph{Implications.} The theorem separates initial recovery from subsequent residence. Let $\delta_i=\min_{j\ne i}d_{ij}$. A cycle that starts within distance $R<\delta_i/4$ of $x_i$ returns to $x_i$ with high probability when the cycling variance $q$ is small relative to the separation margin $\delta_i(\delta_i-4R)$. Once recovered, each subsequent cycle starts at $x_i$ itself, and since $A_i(0)\le A_i(R)$, the return bound after recovery is at least as strong as the initial recovery bound. For any fixed number of cycles, the residence lower bound therefore approaches one as $q\to0$, which in the experiments corresponds to small cycling amplitude $\gamma$. The same bound shows how geometry controls residence. A larger neighbourhood radius or greater cycling noise weakens it, whereas greater separation from competing training points strengthens it. At the opposite extreme, as $q\to\infty$, the endpoint distribution from any fixed starting state approaches the uniform distribution over the $N$ training points, and the probability of $M$ consecutive endpoints at $x_i$ approaches $N^{-M}$. The theorem thus provides an idealized mechanism for persistent residence. Once a trajectory reaches a suitable neighbourhood, cycling repeatedly recovers the same training image. This mechanism has a defined scope. The analysis assumes an exactly memorized score, and it describes residence after a trajectory reaches the neighbourhood of a training point, not how the trajectory first reaches that neighbourhood from a distant initialization (see App.~\ref{sec:Theory} for background, the complete proof, and empirical checks).

\vspace{-0.1in}
\section{Experiments}
\label{sec:experiments}
\vspace{-0.1in}

In this section, we evaluate the geometric signatures and cyclic-denoising dynamics across training regimes defined relative to $\tau_{\mathrm{gen}}$ and $\tau_{\mathrm{mem}}$, on three datasets with several training configurations and on an off-the-shelf Stable Diffusion model, and establish the following. 
\vspace{-0.05in}
\begin{itemize}
\item \textbf{Geometric signatures of memorization} (\S~\ref{sec:exp-geometry}). Score divergence and basin volume separate training and generated samples from held-out samples before one-shot generation reproduces any training sample. The onset of this separation scales with $\tau/n$, as does $\tau_{\mathrm{mem}}$.
\item \textbf{Latent memorization} (\S~\ref{sec:exp-cyclic}). Within the generalization window, CD increases the memorized fraction from at most $0.1\%$ to over $30\%$. CD also reveals degenerate attractors, which are prevalent near $\tau_{\mathrm{gen}}$ and vanish as training proceeds.
\item \textbf{Stable Diffusion} (\S~\ref{sec:exp-sd}). Without retraining, CD recovers training images from initial generations that are not copies, and it increases the detection power of $\Delta\mathcal{A}_1$ at strict false-positive rates.
\end{itemize}

\vspace{-0.1in}
\subsection{Experimental Setup}
\vspace{-0.1in}
\textbf{Datasets.}
We train from scratch on three datasets (synthetic, CelebA, and CIFAR-10) and evaluate a pretrained model (Stable Diffusion) on a prompt collection. The synthetic dataset is an eight-dimensional, equally weighted GMM, $P_0=\tfrac12\mathcal{N}(\mathbf{1}_d,I_d)+\tfrac12\mathcal{N}(-\mathbf{1}_d,I_d)$, whose population score is available analytically, with training sets of size $n\in\{128,256,512,1024,2048,4096\}$. For CelebA~\citep{7410782}, we centre-crop and resize images to $32\times32$ grayscale and use subsets of size $n\in\{512,1024,2048,4096\}$. For CIFAR-10~\citep{krizhevsky2009learning}, we use images at their native $32\times32$ colour resolution and subsets of size $n\in\{2048,4096,8192\}$. For Stable Diffusion, we select memorized, partially memorized, non-memorized and plain english prompts from \citet{webster2023reproducibleextractiontrainingimages}, \citet{wenDetectingExplainingMitigating2024} and \citet{hongMemBenchMemorizedImage2024}. 

\textbf{Models and training.}
For the GMM, we train residual multilayer perceptrons (MLPs) with full-batch stochastic gradient descent (SGD). For CelebA, we train attention-based U-Nets with SGD or Adam. The GMM and CelebA models use a linear noise schedule from $\beta_1=10^{-4}$ to $\beta_T=2\times10^{-2}$ over $T=1{,}000$ diffusion timesteps. For CIFAR-10, we train unconditional pixel-space diffusion transformers (DiT-S/2)~\citep{Peebles2022ScalableDM} with Adam, using a cosine noise schedule \citep{nicholImprovedDenoisingDiffusion2021} with offset $s=0.008$ over $T=500$ diffusion timesteps. We continue training until $f_{\mathrm{mem}}$ is high, so that the saved checkpoints span pre-generalization, generalization, and memorization. We evaluate Stable Diffusion v1.4 off the shelf, without retraining or fine-tuning (see App.~\ref{app:finer-things-club} for full training and evaluation details).

\textbf{Cyclic-denoising protocol.}
We apply CD at three representative checkpoints: the checkpoint with the best generation quality before overfitting, a checkpoint after overfitting but before $\tau_{\mathrm{mem}}$, and a checkpoint after $\tau_{\mathrm{mem}}$. At each, we freeze the model and initialize trajectories either from its training images or from $n$ fresh DDIM generations, matching the training-set size. For a cycling amplitude $\gamma$, we set $t_\gamma=\operatorname{round}[\gamma(T-1)]$, noise each image to $t_\gamma$, and denoise it back to $t=0$ with the DDPM ancestral sampler, drawing fresh forward noise at every cycle. Each output initializes the next cycle, and the initial images form cycle zero. Trajectories are independent; batching serves only computation. For Stable Diffusion, we apply CD in latent space.

\textbf{Evaluation metrics.}
\label{sec:metrics-experiment}
\emph{Geometry.} At each checkpoint, we generate samples with DDIM \citep{songDenoisingDiffusionImplicit2020} and evaluate $\mathcal{A}_1$, $\widehat{\mathcal{A}}_1$, and basin volume on training, generated, and held-out samples (Section~\ref{sec:geom-sig}). We also report the training and held-out losses at the fixed low-noise level $\gamma=0.01$.

\emph{Memorization and sample quality.} Following
\citet{yoon2023diffusion,guMemorizationDiffusionModels2023},
we classify a generated sample as memorized when the ratio of its Euclidean distances to its nearest and second-nearest training
neighbours is below $\kappa=1/3$, and $f_{\mathrm{mem}}$ is the percentage of generated samples so classified. Error bars denote one bootstrap standard deviation of the mean over $1{,}000$ resamples of the generated set. Sample quality is measured by the Fréchet inception distance ($\mathbf{FID}$) \citep{Heusel2017GANsTB} against disjoint held-out sets for CelebA and CIFAR-10, and by a Monte Carlo estimate of $\boldsymbol{D_{\mathrm{KL}}(P_\theta\|P_0)}$ from $10{,}000$ generated samples for the GMM.

\emph{Cyclic dynamics.}
We track $S_{\mathrm{cyc}}^{(k)}$, $\mathcal{A}_1$, $\widehat{\mathcal{A}}_1$, and $f_{\mathrm{mem}}$ as functions of the cycle index $k$ and the cycling amplitude $\gamma$ (\S~\ref{sec:cyclic-probe}).

\emph{Stable Diffusion.}
We use two copy-detection quantities built on the $\ell_2$-normalized SSCD descriptor $\phi$ \citep{Pizzi_2022_CVPR}.
The similarity to training data, $\boldsymbol{\mathrm{SSCD}(x,p)}=\max_{g\in G(p)}\langle\phi(x),\phi(g)\rangle$, compares a sample with the set $G(p)$ of training images matched to prompt $p$.
The seed agreement, $\boldsymbol{\mathrm{SA}(p)}=\binom{S}{2}^{-1}\sum_{i<j}\langle\phi(x^{(i)}),\phi(x^{(j)})\rangle$, is the mean pairwise similarity of the $S=4$ generations of $p$ under different seeds and measures basin collapse without reference to any training image. We also track the conditional--unconditional gap $\boldsymbol{\Delta\mathcal{A}_1}=\mathcal{A}_1(x;c_p)-\mathcal{A}_1(x;\epsilon)$, where $c_p$ is the CLIP text embedding of $p$ and $\phi$ that of the empty prompt, which forms the unconditional branch of classifier-free guidance.
Subtracting the unconditional branch removes the curvature the model has at $x$ independently of any prompt (see App.~\ref{app:metrics} for definitions and implementations of all metrics).

\vspace{-0.1in}
\subsection{Evolution of Geometric Signatures Across Training}
\vspace{-0.1in}
\label{sec:exp-geometry}

Figure~\ref{fig:training-dynamics} shows the losses, the mean normalized score divergence $\widehat{\mathcal{A}}_1$ on training, held-out, and generated samples, the mean log basin volume, and $f_{\mathrm{mem}}$ across training on CelebA at $n=1024$. 

\textit{Divergence separates before one-shot memorization.} The held-out loss reaches its minimum at $\tau=45{,}000$ and then rises while the training loss continues to fall, which marks the overfitting transition (Fig.~\ref{fig:training-dynamics}A). The divergence of training and held-out samples coincides until $\tau\approx45{,}000$ and separates at $\tau_{\mathrm{div}}=65{,}000$ (Fig.~\ref{fig:training-dynamics}B). The first non-zero $f_{\mathrm{mem}}$ appears only at $\tau_{\mathrm{mem}}=90{,}000$, $25{,}000$ steps after $\tau_{\mathrm{div}}$, and $f_{\mathrm{mem}}$ exceeds $1\%$ only at $\tau=155{,}000$. Between $\tau_{\mathrm{div}}$ and $\tau_{\mathrm{mem}}$, the model is in the latent-memorization regime: the score field contracts around training samples, yet one-shot generation reproduces none of them.

\textit{Dependence on noise level and dataset size.}
The separation appears at small noise levels. Across the $\gamma$ sweep at every checkpoint (Fig.~\ref{fig:regime-gamma-grid}), the divergence stays in a narrow band near zero in the generalized regime, whereas in the memorized regime it is strongly negative at small $\gamma$ and returns toward zero as $\gamma$ grows. Small $\gamma$ resolves structure on the scale of individual basins and large $\gamma$ averages over many, so we report $\gamma=0.001$. The timing depends on the dataset size. Both the separation and the rise in $f_{\mathrm{mem}}$ occur later for larger $n$, and replotting against $\tau/n$ collapses the onsets for $n\in\{512,1024,2048\}$ (Fig.~\ref{fig:training-dynamics}B, inset). The geometric signature thus follows the $O(n)$ scaling established for $\tau_{\mathrm{mem}}$~\citep{bonnaireWhyDiffusionModels2025}.

\textit{Basin volume and generated samples.}
Basin volume separates in the same direction (Fig.~\ref{fig:training-dynamics}C). Its mean grows for training and generated samples and decreases for held-out samples. Its standard deviation decreases for training and generated samples and stays roughly constant for held-out samples, which indicates that basins form increasingly uniformly around training examples. From $\tau=5{,}000$ onward, the divergence of generated samples follows that of training samples with a mean absolute difference of $0.015$, and the two distributions are almost identical across training configurations (App.~\ref{app:additional-results}). Generated samples can therefore replace the training set in the contrast with held-out samples, which makes the signature readable from the model and held-out reference data alone. Appendix~\ref{app:membership} uses this property for membership inference from score divergence.
\begin{figure*}[ht]
\vspace{-0.2in}
    \centering
    \includegraphics[width=\textwidth]{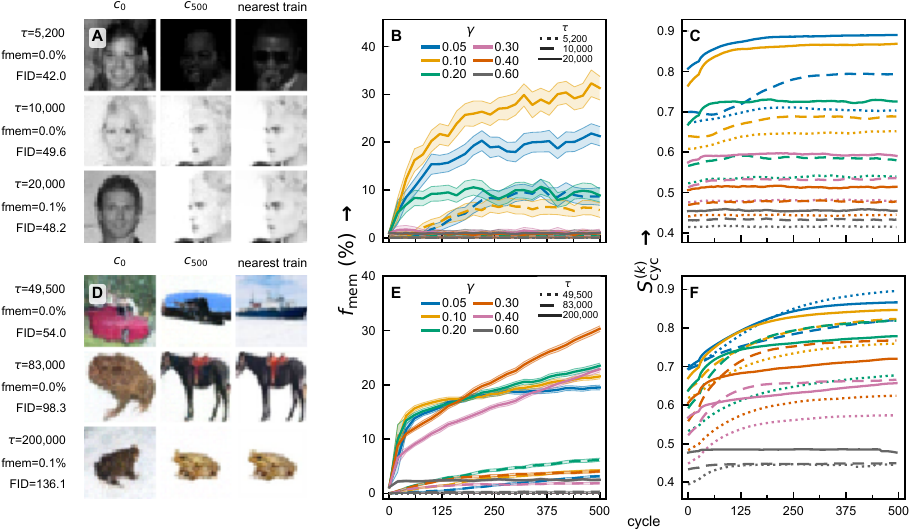}
\vspace{-0.2in}
\caption{
\textbf{Cyclic denoising exposes latent memorization inside the generalization window.}
CelebA ($n=4096$, Adam; \textbf{A--C}) and CIFAR-10 ($n=8192$; \textbf{D--F}) at checkpoints from $\tau_{\mathrm{gen}}$ to $\tau_{\mathrm{mem}}$.
(\textbf{A,D}) Generated samples at $c_0$ and $c_{500}$ ($\gamma=0.10$) with their nearest training images.
(\textbf{B,E}) $f_{\mathrm{mem}}$ and (\textbf{C,F}) $S_{\mathrm{cyc}}^{(k)}$ across cycles for several $\gamma$.}
\vspace{-0.2in}
\label{fig:cyclic-population}
\end{figure*}

\vspace{-0.1in}
\subsection{Cyclic Denoising Dynamics}
\label{sec:exp-cyclic}
\vspace{-0.1in}
We now test whether the basins that form in the latent-memorization regime are reachable through CD, on CelebA ($n=4096$, Adam) and CIFAR-10 ($n=8192$) (Fig.~\ref{fig:cyclic-population}).

\textit{CD recovers training images that one-shot generation misses.}
At CelebA $\tau=10{,}000$ and CIFAR-10 $\tau=83{,}000$, one-shot samples exhibit $f_{\mathrm{mem}}=0\%$, yet $f_{\mathrm{mem}}$ rises with cycling (Fig.~\ref{fig:cyclic-population}B,E). At CelebA $\tau=20{,}000$, where one-shot $f_{\mathrm{mem}}$ is $0.1\%$, cycling at $\gamma=0.10$ raises $f_{\mathrm{mem}}$ above $30\%$ by cycle 500. Continued exploration brings more trajectories into basins of individual training samples, although only a subset of training samples form such basins.

\textit{The outcome depends on the cycling amplitude and the training stage.}
In several CelebA settings, $f_{\mathrm{mem}}$ plateaus within 500 cycles (Fig.~\ref{fig:cyclic-population}B). At that $\gamma$, few additional trajectories find detectable training-sample basins within the observed horizon, although exploration need not have stopped. At CIFAR-10 $\tau=200{,}000$ and $\gamma=0.30$, $f_{\mathrm{mem}}$ instead keeps rising through cycle 500 (Fig.~\ref{fig:cyclic-population}E). At CelebA $\gamma=0.40$, neither $f_{\mathrm{mem}}$ nor $S_{\mathrm{cyc}}^{(k)}$ rises appreciably, even at $\tau=20{,}000$ (Fig.~\ref{fig:cyclic-population}B,C). Perturbations at this amplitude exceed the range over which the accessible basins remain stable, and no residence occurs. This trend matches Thm.~\ref{thm:recovery}, where larger cycling variance weakens the residence bound.

\textit{Geometry changes before identification.}
Individual trajectories follow the same order of events observed across training (App.~Fig.~\ref{fig:attractor-grid}). Their images approach particular training examples over hundreds of cycles (A,D). The drop in $\mathcal{A}_1$ often begins before the trajectory is persistently classified as memorized (B,E), and $S_{\mathrm{cyc}}^{(k)}$ then reaches a high plateau (C,F), consistent with continued residence. Score divergence also decreases across cycles at the population level (App.~\ref{app:additional-results}). Within a trajectory, as within training, the geometry changes first and identification with a training sample follows.

\textit{Residence alone does not establish memorization.}
At the earlier $\tau_{gen}$ CelebA and CIFAR-10 checkpoints, $S_{\mathrm{cyc}}^{(k)}$ approaches $0.9$ at some amplitudes while $f_{\mathrm{mem}}$ stays near zero (Fig.~\ref{fig:cyclic-population}B,C,E,F). These trajectories reside in degenerate attractors. On CIFAR-10 at $\gamma=0.05$, degenerate attractors hold $50.9\%$ of trajectories at $\tau_{\mathrm{gen}}=49{,}500$, $16.0\%$ at the generalization-window checkpoint $\tau=83{,}000$, and $0.98\%$ by $\tau=200{,}000$. Their appearance changes with training (Fig.~\ref{fig:degenerate-attractor-grid}). The $\tau_{\mathrm{gen}}$ states include stripes, nearly monochromatic images, and mixed-colour patterns, whereas some states at $\tau=83{,}000$ contain recognizable objects while matching no single training image. In both cases, $S_{\mathrm{cyc}}^{(k)}$ rises above $0.9$ and saturates, often alongside a decrease in $\mathcal{A}_1$. \citet{sharmaCyclicDenoisingReveals2026} also observe such regions in the landscape and refer to these images as \emph{trivial points}. Residence in these attractors is sensitive to the cycling amplitude: at $\tau_{\mathrm{gen}}$, no trajectory passes the residence screen at $\gamma=0.20$ or $0.30$. Appendix~\ref{app:additional-results} reports further results on the same.

These results separate two properties that CD measures. $S_{\mathrm{cyc}}^{(k)}$ and $\mathcal{A}_1$ detect residence in a basin and require only the model, whereas identifying that basin with a training sample requires the $f_{\mathrm{mem}}$ test and the training set. Inside the generalization window, the share of degenerate attractors falls while $f_{\mathrm{mem}}$ under cycling rises, so residence increasingly occurs in training-sample basins that one-shot generation does not reach. How degenerate basins arise and lose accessibility as training proceeds remains an open question.

\begin{figure*}[t]
\vspace{-0.2in}
\centering
\includegraphics[width=\linewidth]{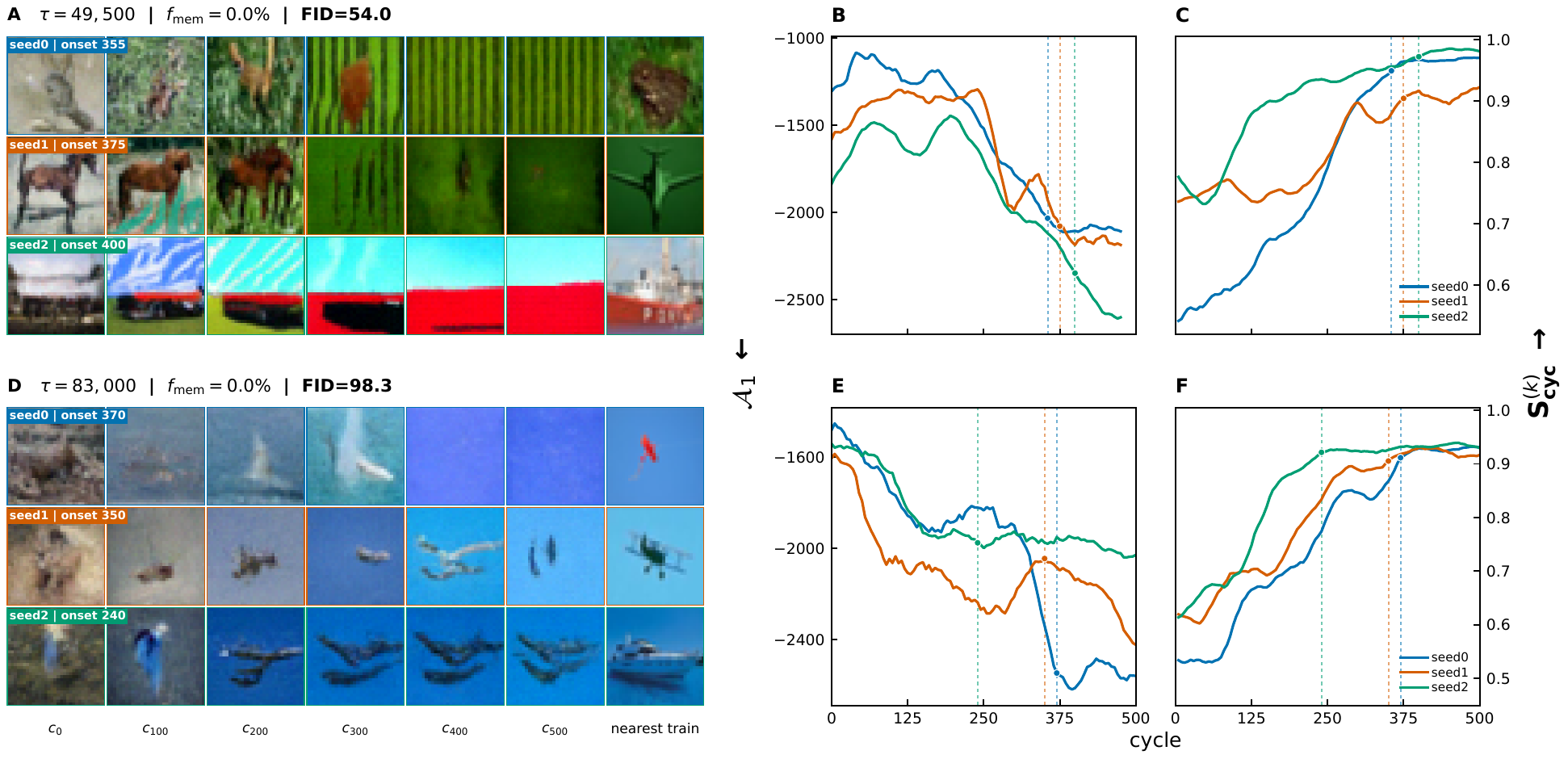}
\vspace{-0.2in}
\caption{
\textbf{Cyclic denoising reveals degenerate attractors.}
CIFAR-10 ($n=8192$, $\gamma=0.05$) at $\tau_{\mathrm{gen}}=49{,}500$ (\textbf{A--C}) and $\tau=83{,}000$ (\textbf{D--F}). (\textbf{A,D}) Three trajectories from $c_0$ to $c_{500}$ with the nearest training image to the final state; no attractor matches a single training image. (\textbf{B,E}) $\mathcal{A}_1$ and (\textbf{C,F}) $S_{\mathrm{cyc}}^{(k)}$ across cycles. Markers show the residence onset, which does not indicate memorization.}
\vspace{-0.2in}
\label{fig:degenerate-attractor-grid}
\end{figure*}

\vspace{-0.1in}
\subsection{Experiments on Stable Diffusion}
\label{sec:exp-sd}
\vspace{-0.05in}

We next test whether these observations extend to off-the-shelf Stable Diffusion v1.4, using prompts from \citet{webster2023reproducibleextractiontrainingimages}, \citet{wenDetectingExplainingMitigating2024} and  \citet{hongMemBenchMemorizedImage2024}.

\textit{Individual chains.}
The six chains in App.~Fig.~\ref{fig:sd}A each isolate one factor. The \texttt{hippo} chain reaches its training image at $c_1$ and stays there through $c_{200}$. The two \texttt{laura} chains share a prompt, a seed, and a $c_0$ image with SSCD $0.203$ to the training image; the chain at $\gamma=0.1$ never arrives, whereas the chain at $\gamma=0.9$ settles from $c_{35}$. The two \texttt{air-con} chains share a prompt and $\gamma$ and differ only in seed: one enters its basin early, leaves it near $c_{100}$, and returns, whereas the other arrives only at $c_{50}$. The \texttt{label tape} chain arrives at $c_{70}$ from a $c_0$ image that bears no resemblance to its training image. Arrival thus depends on the cycling amplitude and the seed, can occur long after cycling begins, can be transient, and does not require the initial image to resemble the training image. The \texttt{laura} and \texttt{label tape} chains show latent memorization in a large-scale model: the initial generation is not a copy, and cycling recovers the training image. In all six chains, $\mathrm{SSCD}(x,p)$ and $\Delta\mathcal{A}_1$ change together at arrival and departure (App.~Fig.~\ref{fig:sd}B,C).

\textit{Population behaviour.}
Memorized prompts show rising $\mathrm{SA}(p)$ and increasingly negative $\Delta\mathcal{A}_1$ under cycling, whereas non-memorized prompts change little in either (App.~Fig.~\ref{fig:sd}E). After 200 cycles, $\Delta\mathcal{A}_1$ is graded by the degree of memorization and stays near zero for non-memorized and plain-English controls (App.~Fig.~\ref{fig:sd}D).

\textit{Detection.}
On 500 memorized and 2,000 non-memorized prompts from \citep{wenDetectingExplainingMitigating2024}, cycling at $\gamma=0.9$ raises the AUC of $\Delta\mathcal{A}_1$ from $0.811$ at $c_0$ to $0.944$ at $c_{60}$, and its TPR at $1\%$ FPR from $0.526$ to $0.866$ (App.~\ref{app:SD_AUC}). The conditional--unconditional subtraction matters at strict false-positive rates: the conditional divergence alone reaches a similar AUC of $0.948$ but a TPR at $1\%$ FPR of only $0.194$. On the same prompts and seeds, the single-shot detector of \citet{wenDetectingExplainingMitigating2024} achieves a higher AUC of $0.990$ and a comparable TPR at $1\%$ FPR of $0.838$. The comparison depends on the negative source. Against LAION negatives, which match the training distribution, $\Delta\mathcal{A}_1$ reaches a TPR at $1\%$ FPR of $0.820$ against $0.740$ for the single-shot detector, whereas the single-shot detector remains stronger against COCO, Lexica, and random-token negatives (App.~\ref{app:SD_AUC}). CD thus transfers to a model trained at scale without retraining. It reveals when and how a trajectory reaches a memorized image, and it adds detection power at strict false-positive rates on in-distribution negatives.

\vspace{-0.1in}
\section{Conclusion}
\vspace{-0.1in}
We studied how memorization develops in diffusion models through the geometry of the learned energy landscape. Localized basins form around individual training examples before one-shot generation produces copies. Score divergence and basin volume separate training from held-out samples in this latent-memorization regime, and their onset follows the same $O(n)$ scaling as $\tau_{\mathrm{mem}}$. Cyclic denoising reaches the training-sample basins that one-shot sampling misses and recovers training images from models whose one-shot samples contain none. Residence under cycling alone does not imply memorization: degenerate attractors are prevalent near $\tau_{\mathrm{gen}}$ and fade as training proceeds. Under the exact empirical score, a well-separated training point retains cycling trajectories with a probability controlled by the cycling noise and its separation from competing points. These behaviours hold across a Gaussian mixture, CelebA, CIFAR-10, and off-the-shelf Stable Diffusion v1.4. Memorization is encoded in the landscape geometry before it is expressed in generated samples, and probing this geometry offers a route to auditing models whose outputs appear free of copies. Limitations and future works arising from the present work are discussed in App.~\ref{app:limitations}.
\newpage

\subsection*{AI Use Statement}
All AI-assisted suggestions, analyses, and manuscript changes were reviewed by the authors. The authors independently verified the reported experimental results and mathematical claims and take full responsibility for the final technical content, analyses, and conclusions of the paper. We used the Stanford Agentic Reviewer (https://paperreview.ai) as author-side review tools to obtain preliminary feedback and identify points requiring further clarification or validation.

\subsection*{Ethics Statement}
This work studies the emergence of memorization in diffusion models
through the geometry of the learned energy landscape and the dynamics of
cyclic denoising. Understanding these dynamics can inform future work on
privacy auditing, training data protection, and mitigating unintended
memorization in generative models. Our experiments use publicly
available datasets and models in controlled research settings. The analysis is 
intended to improve understanding of memorization, thereby supporting the development of safer and
more privacy-preserving generative models.
 
\subsection*{Reproducibility Statement}
The code will be made publicly available upon acceptance of the paper. The appendix details the full experimental protocol, including hyperparameters, model and training details, sampling configurations, data preprocessing and splits, and metric definitions and computation, together with the implementation details needed to reproduce the principal results.

\clearpage
\appendix

\etocdepthtag.toc{appendix}

\begingroup
\etocsettagdepth{main}{none}
\etocsettagdepth{appendix}{subsection}
\etocsettocdepth{subsection}

\etocsettocstyle{%
\section*{Appendix Contents}
\label{app:contents}

}{}

\setlength{\parskip}{0pt}
\tableofcontents
\endgroup

\section{Notation}
\label{app:notation}

Table~\ref{tab:notation} summarizes the notation used throughout the paper.

\begin{table*}[t]
\centering
\caption{Notation used throughout the paper.}
\label{tab:notation}

\scriptsize
\renewcommand{\arraystretch}{0.92}
\setlength{\tabcolsep}{3pt}

\begin{tabular}{
    p{0.09\textwidth} p{0.37\textwidth}
    p{0.09\textwidth} p{0.37\textwidth}
}
\toprule
\textbf{Notation} & \textbf{Meaning} &
\textbf{Notation} & \textbf{Meaning} \\
\midrule

\multicolumn{4}{l}{\textit{\textbf{Diffusion model}}} \\[-1pt]

$x_0$ &
Clean data sample. &
$x_t$ &
Noisy sample at diffusion timestep $t$. \\

$t$ &
Diffusion timestep. &
$T$ &
Total number of diffusion timesteps. \\

$\beta_t$ &
Forward-process variance schedule. &
$\alpha_t=1-\beta_t$ &
Signal-retention coefficient. \\

$\bar{\alpha}_t=\prod_{s=1}^{t}\alpha_s$ &
Cumulative signal-retention coefficient. &
$\epsilon\sim\mathcal{N}(0,I)$ &
Gaussian noise in the forward process. \\

$\epsilon_\theta(x_t,t)$ &
Network prediction of injected noise. &
$q(x_t\mid x_0)$ &
Forward diffusion distribution. \\

$p_\theta(x_{t-1}\mid x_t)$ &
Learned reverse transition. &
$p_t(x)$ &
Data density at diffusion time $t$. \\

$s_t(x)=\nabla_x\log p_t(x)$ &
Exact score function. &
$s_\theta(x,t)$ &
Learned score approximation. \\

$E_t(x)=-\log p_t(x)$ &
Time-dependent energy. &
$d$ &
Dimensionality of the data space. \\

\midrule
\midrule
\multicolumn{4}{l}{\textit{\textbf{Geometric quantities}}} \\[-1pt]
$\mathcal A(x,t)$ &
Score divergence: reported unscaled mean outward
learned-score flux. &
$\widehat{s}_\theta(x,t)
=s_\theta(x,t)/\|s_\theta(x,t)\|_2$ &
Unit-normalized learned score, where defined. \\
$\widehat{\mathcal A}(x,t)$ &
Normalized score divergence: mean outward alignment
of unit-normalized boundary scores. &
$R_{\mathrm p}$ &
Physical radius of the score-probing sphere. \\
$e_j\sim\mathrm{Unif}(\mathbb S^{d-1})$ &
Random outward unit probe direction. &
$J$ &
Number of boundary probes ($J=64$). \\
$V_{\mathrm{basin}}$ &
Effective basin/recovery volume. &
$R$ &
Radius of the theoretical neighborhood $B_R$. \\
\midrule
\multicolumn{4}{l}{\textit{\textbf{Training and memorization}}} \\[-1pt]

$\tau$ &
Training/optimization step. &
$\tau_{\mathrm{gen}}$ &
Onset of the generalization regime. \\

$\tau_{\mathrm{mem}}$ &
Onset of visible memorization. &
$f_{\mathrm{mem}}$ &
Fraction of generated samples classified as memorized. \\

$n$ &
Training-set size. &
$\kappa$ &
Threshold in the nearest-neighbor memorization criterion. \\

\midrule
\multicolumn{4}{l}{\textit{\textbf{Cyclic denoising}}} \\[-1pt]

$\gamma=t/T$ &
Normalized cyclic-denoising noise amplitude. &
$x^{(k)}$ &
Sample after cycle $k$. \\

$U_t(x'\mid x)$ &
One-cycle transition kernel. &
$k$ or $c$ &
Cyclic-denoising iteration index. \\

\midrule
\multicolumn{4}{l}{\textit{\textbf{Theoretical analysis}}} \\[-1pt]

$x_i,x_j$ &
Individual training samples. &
$N$ &
Number of samples in the empirical distribution. \\

$q$ &
Noise variance in $U=x_i+\sqrt{q}Z$. &
$Z\sim\mathcal{N}(0,I)$ &
Gaussian noise in the theoretical analysis. \\

$U$ &
Noisy observation of a training sample. &
$w_i(u)$ &
Posterior probability associated with $x_i$. \\

$d_{ij}=\|x_j-x_i\|$ &
Distance between training samples $x_i,x_j$. &
$B_R$ &
Radius-$R$ neighborhood around $x_i$. \\

$A_i(r)$ &
Upper-bound term for failure to recover $x_i$. &
$h_i(x)$ &
Probability that one cycle returns to $x_i$. \\

$M$ &
Number of cycles in the persistence bound. &
$r$ &
Distance/radius around training sample $x_i$. \\

\midrule
\multicolumn{4}{l}{\textit{\textbf{Evaluation metrics}}} \\[-1pt]

$\mathrm{FID}$ &
Fr\'echet Inception Distance. &
$D_{\mathrm{KL}}(P_\theta\|P_0)$ &
KL divergence used for GMM evaluation. \\

$\mathrm{SSCD}$ &
Copy-detection similarity. &
$\Delta\hat{A}$ &
Conditional--unconditional geometric-statistic difference. \\

\bottomrule
\end{tabular}
\end{table*}

\section{Extended Related Works}
\label{app:extended_rw}

\paragraph{Memorisation in Diffusion Models.}
\citet{bonnaireWhyDiffusionModels2025} distinguish a generalization time $\tau_{\mathrm{gen}}$, when high-quality novel samples first appear, from a later memorization time $\tau_{\mathrm{mem}}$, when generated samples begin reproducing the training set. They find that both loss-level overfitting and visible memorization occur on $O(n)$ training timescales, where $n$ is the number of training samples. These events do not coincide: training and test losses separate while the memorization fraction $f_{\mathrm{mem}}$ remains near zero. \citet{faveroBiggerIsntAlways2025} report similar memorization-time scaling across image and language diffusion models and describe generalization and memorization as competing training timescales. Both works motivate early stopping.

Connecting diffusion models with dense associative memories, \citet{phamMemorizationGeneralizationEmergence2025} vary dataset size while holding the model and training procedure fixed. Small datasets yield attractors associated with individual training examples; intermediate sizes yield spurious attractors; and larger datasets yield generalized states on flatter manifolds. Their geometric measurements distinguish these regimes: memorized states have the largest measured basin volumes, spurious states have smaller but nonzero basins, and generalized states have nearly vanishing basin volume. Their comparison varies the data regime, whereas our analysis follows basin geometry along the optimization axis of a fixed model and dataset.

\paragraph{Geometric detection of memorization.}
Several methods detect memorization through the geometry of trained diffusion models. For conditional text-to-image models, \citet{wenDetectingExplainingMitigating2024} use the norm of the difference between conditional and unconditional scores to identify memorized prompts, including at the first denoising step. Building on this measure, \citet{jeonUnderstandingMitigatingMemorization2025} relate the score difference to sharpness of the log-probability density and introduce a Hessian-based measure of sharpness at the start of generation. \citet{asthanaDetectingMitigatingMemorization2026} show that the low-noise landscape becomes anisotropic, so a scalar score-difference norm can miss directional structure. They combine the norm with the angular alignment between the classifier-free guidance vector and the unconditional score. \citet{brokmanIdentifyingMemorizationDiffusion2025,zotero-item-106} construct Monte Carlo estimators of the $p$-Laplacian from the learned score to identify memorized samples. These methods evaluate trained models; our analysis asks how geometric signatures develop during training.

\paragraph{Cyclic Denoising.}
Geometric measurements describe the local structure of the learned landscape, but not how trajectories move within or between its regions.
Repeated partial noising and denoising probes this dynamical structure directly.
\citet{kangSamplingDataChains2026} iterate forward--backward diffusion steps to form a U-Turn Markov chain (UTMC) and study its dynamics across cycles at fixed U-Turn noise levels.
In their synthetic hierarchical model, low-noise UTMCs restrict exploration and cannot connect disconnected components of the data manifold, leading to non-ergodic dynamics.
Larger noise levels enable transitions between components and restore ergodicity.
At low noise, high-level features retain memory of the initial sample longer than low-level features; this ordering reverses at larger noise levels.
They observe the same qualitative behavior in natural-language and image experiments.
\citet{sharmaCyclicDenoisingReveals2026} study repeated forward--reverse steps as cyclic denoising (CD), using it as an unconditional extraction attack and a probe of the learned landscape.
At low corruption, trajectories can enter trivial fixed points or short periodic cycles.
At larger corruption, they rearrange, cross between basins, and become trapped in long-lived structured attractors, many of which reproduce training images.
They use high similarity between successive cycle outputs as an indication of basin residence and uncover images close to potential training examples.
They demonstrate this behavior and the resulting extraction on Stable Diffusion v1.4 and on a denoising diffusion probabilistic model (DDPM) trained on CIFAR.
Together, these works offer distinct views of CD: \citet{kangSamplingDataChains2026} study landscape exploration and derive theoretical results on ergodicity and feature relaxation in a synthetic hierarchical model, while \citet{sharmaCyclicDenoisingReveals2026} demonstrate its utility in training data extraction from fixed trained models. What remains missing is an account of how CD behavior changes as memorization develops during training, when it can extract memorized images, and the dynamics underlying its behavior. We investigate these questions across training and provide idealized theoretical analysis for CD.

\section{Limitations and future works}
\label{app:limitations}

Some of the major limitations of the work are discussed below. These could be addressed as part of future works.
\begin{itemize}
    \item Our study of how memorization develops during training covers small datasets at $32\times32$ resolution with models trained from scratch. Stable Diffusion, although studied, is only evaluated as a single released checkpoint. Whether latent memorization forms on the same timescales, and remains detectable, in large conditional models trained on web-scale data is untested. 
    \item Similarly, residence under cyclic denoising identifies basins but not their origin. Linking a basin to a training sample still requires the training set or a set of candidate images, and degenerate attractors show that residence alone can be misleading.
    \item The geometric probes carry their own constraints. In high dimensions, score divergence is estimated from finite boundary samples at a chosen probe scale and noise level, and the signal fades at larger noise. The estimates therefore depend on these choices. 
    \item CD needs tens to hundreds of cycles per trajectory, which makes it far more expensive than single-shot detectors. Its benefit on Stable Diffusion is limited to strict false-positive rates on in-distribution negatives.
    \item The theory assumes the exact empirical score, the fully memorized limit. It explains why trajectories stay near a training point once they arrive, but not how they arrive from distant starts, and not how residence behaves under the learned score in the latent regime.
    \item Finally, the same probe that audits a model can extract training data from it, as our Stable Diffusion experiments show. Deploying cyclic denoising for auditing therefore requires the access controls discussed in Appendix~\ref{app:broader-impact}.
\end{itemize}

\section{Broader Impact}
\label{app:broader-impact}
Understanding how memorization emerges in diffusion models has implications for both trustworthy generative modeling and responsible model deployment. Our results suggest that memorization can leave detectable geometric and dynamical signatures before it becomes obvious through ordinary generation, creating opportunities for earlier auditing of models for privacy leakage, training-data reproduction, and copyright-sensitive behavior. The proposed landscape-based diagnostics and cyclic-denoising probes may therefore help practitioners compare training configurations, identify models or checkpoints that are beginning to encode sample-specific memories, and motivate memorization-aware stopping or mitigation strategies. At the same time, because cyclic denoising can expose training-associated samples that standard generation may not readily reveal, the same techniques could potentially be used for model extraction or privacy attacks. We therefore view these methods primarily as diagnostic tools for understanding and auditing diffusion models, and encourage their use alongside appropriate access controls and responsible evaluation practices. The broader message is that model reliability should be assessed not only from generated outputs, but also from the hidden geometric structure and dynamical stability of memories encoded in the learned distribution.

\section{Some Preliminaries}
\subsection{Diffusion Models}
\label{app:prelim-diff}
Diffusion models (DMs) are generative models that learn to transform a simple noise distribution into a complex data distribution by reversing a fixed, progressive corruption process \citep{songDenoisingDiffusionImplicit2020}. Given a clean data sample $x_0 \sim p_{\text{data}}$ and a predefined variance schedule $\beta_1, \dots, \beta_T$, the forward process injects Gaussian noise
over $T$ discrete timesteps. Defining $\alpha_t = 1 - \beta_t$ and $\bar{\alpha}_t = \prod_{s=1}^t \alpha_s$, the forward marginal is
\begin{equation}
    Q(x_t \mid x_0)
    = \mathcal{N}\!\left(x_t;\, \sqrt{\bar{\alpha}_t}\, x_0,\,
      (1 - \bar{\alpha}_t)I\right).
\end{equation}
This lets a noisy sample at any timestep $t$ be written in closed form as
\begin{equation}
    x_t = \sqrt{\bar{\alpha}_t}\, x_0 + \sqrt{1 - \bar{\alpha}_t}\,\epsilon,
    \qquad \epsilon \sim \mathcal{N}(0, I).
\end{equation}

To generate new samples, DMs learn a reverse process that gradually denoises a sample initialized from the prior $x_T \sim \mathcal{N}(0, I)$. A network $\epsilon_\theta$ with parameters $\theta$ is trained to predict the injected noise via the simplified denoising objective \citep{hoDenoisingDiffusionProbabilistic2020}:
\begin{equation}
    \mathcal{L}(\theta)
    = \mathbb{E}_{x_0, \epsilon, t}
      \left[\, \| \epsilon - \epsilon_\theta(x_t, t) \|^2 \,\right].
\end{equation}
This objective is mathematically equivalent to denoising score matching \citep{songScoreBasedGenerativeModeling2020}: the score of the perturbed distribution, $s_t(x_t) = \nabla_{x_t} \log p_t(x_t)$, is recovered directly from the model's noise prediction,
\begin{equation}
    s_\theta(x_t, t)
    = -\frac{\epsilon_\theta(x_t, t)}{\sqrt{1 - \bar{\alpha}_t}}.
    \label{eq:score_estimate}
\end{equation}
Sampling from a diffusion model can therefore be understood as simulating a probability-flow or Langevin dynamics governed by the learned score field $s_\theta(x_t, t)$.

\subsection{\texorpdfstring{$\gamma$ to $\sigma_t$}{gamma to sigma\_t}}
\label{app:sigma-gamma}

\begin{figure}[ht]
    \centering
    \includegraphics[width=0.8\linewidth]{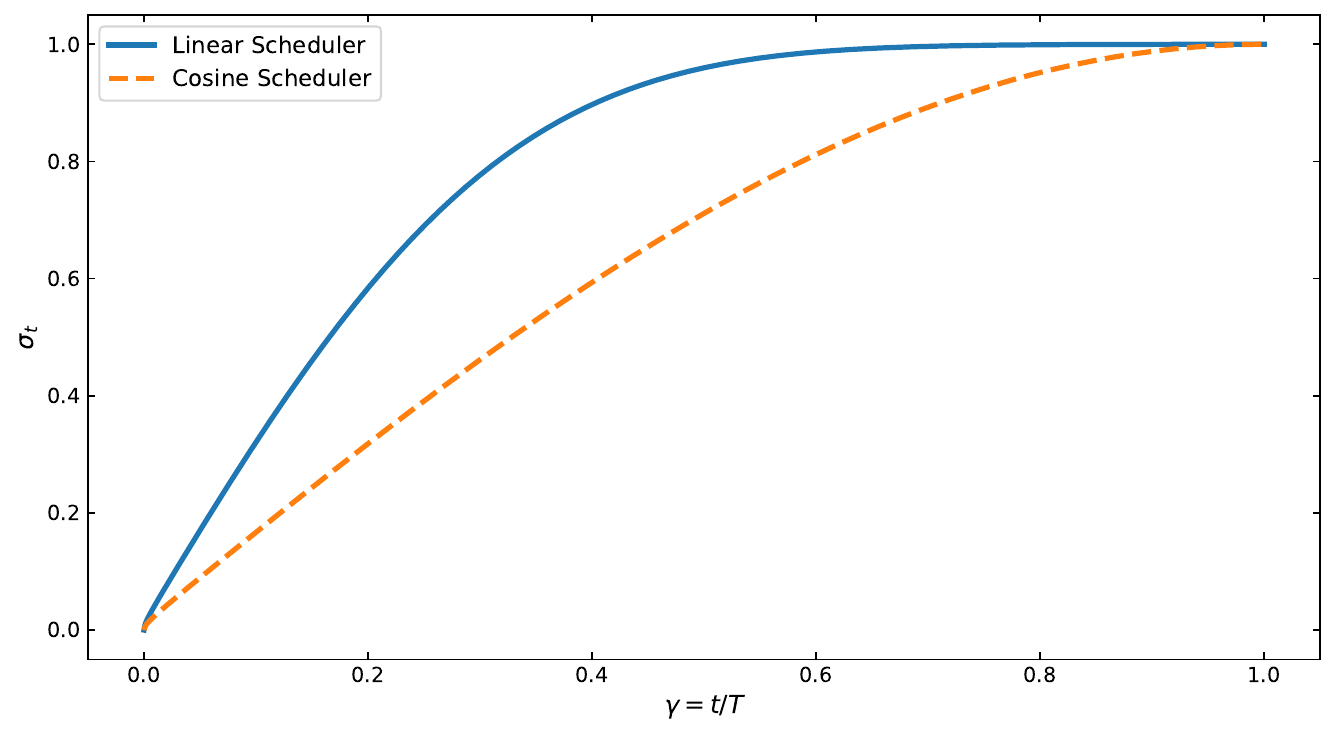}
\caption{Comparison of the noise standard deviation ($\sigma_t = \sqrt{1 - \bar{\alpha}_t}$) versus the normalized timestep ($\gamma = t/T$) for linear and cosine noise schedulers. The cosine scheduler maintains lower noise magnitude for a significantly broader range of timesteps.}
    \label{fig:sigma_t_vs_gamma_comparison}
\end{figure}
\FloatBarrier

\section{Theoretical Insight}
\label{sec:Theory}

\subsection{Background}
\label{sec:Theory_bg}

\paragraph{Why should cyclic denoising reveal memorized samples?}
Our experiments suggest that, as memorization develops, individual training
examples become increasingly stable under repeated noise--denoise cycles.
This raises a natural theoretical question: \emph{when does a training sample
exhibit prolonged residence under cyclic denoising, and how does this
residence depend on the noise level and separation from other training
examples?}
To isolate this mechanism, we analyze the exact empirical-score
limit~\citep{biroliDynamicalRegimesDiffusion2024}, in which the clean
distribution is the empirical training distribution and denoising performs
exact stochastic reversal.

\paragraph{Exact empirical-score limit.}
Let $\mathcal D=\{x_1,\ldots,x_N\}\subset\mathbb R^d$ consist of distinct
training points, with $N\ge2$, and let
\[
    p_0=\frac1N\sum_{j=1}^N\delta_{x_j}.
\]
Fix a cycling noise variance $q>0$. In additive-noise coordinates, a cycle
starting from $x$ first produces
\[
    U=x+\sqrt q\,Z,
    \qquad Z\sim\mathcal N(0,I).
\]
We assume that denoising uses the exact empirical score and samples from
the corresponding exact reverse endpoint distribution. Bayes' rule gives
\[
    \mathbb P(X'=x_i\mid U=u)=w_i(u),
    \qquad
    w_i(u)=
    \frac{\exp[-\|u-x_i\|^2/(2q)]}
         {\sum_{j=1}^N\exp[-\|u-x_j\|^2/(2q)]}.
\]
Each cycle uses fresh independent forward noise and reverse-sampling
randomness. Starting from any initial state, the first complete cycle
produces a training point; subsequent clean endpoints form a
time-homogeneous Markov chain on $\mathcal D$.

\paragraph{Isolation controls recovery and residence.}
Recovery depends on the noisy observation's relative compatibility with
the target and all competing training points. The relevant quantities
are the inter-sample distances $d_{ij}=\|x_i-x_j\|$, the cycling variance
$q$, and the initial distance from the target. Starting sufficiently
close to $x_i$, small noise relative to its separation from competitors
makes recovery reliable. The theorem quantifies this through a sum over
all competitors, accounting for both their distances and their number.
Increasing the noise weakens the resulting recovery and residence bounds.

\paragraph{What the theorem establishes.}
For an initialization within a sufficiently small neighborhood of $x_i$,
the theorem lower-bounds the probability of recovering $x_i$ in one
complete cycle. Conditional on recovery, subsequent cycles start at
$x_i$ itself, allowing a bound on repeated return to the same training
point. For a fixed dataset and a neighborhood satisfying the theorem's
separation condition, sufficiently small noise yields high-probability
residence over any prescribed finite horizon. Residence concerns clean
endpoints; intermediate noisy states and reverse trajectories need not
remain in the neighborhood. The theorem does not establish permanent
trapping or discovery of the neighborhood from distant initializations.
The empirical validation in this appendix evaluates the
theorem's quantitative prediction and lower bound using
exact empirical posterior sampling on a GMM dataset, then
examines approximate recovery and residence near training
images using fully memorized and low-memorization checkpoints
of a U-Net diffusion model trained on CelebA. The learned-model evaluation assesses
the practical relevance of the mechanism using a reconstruction-distance
tolerance; the exact guarantee applies under the assumptions stated above.

Formally, define
\[
    h_i(x)=\mathbb E_Z[w_i(x+\sqrt q\,Z)],
\]
the probability that one complete cycle starting from $x$ recovers $x_i$.
We now bound this probability and the resulting finite-cycle residence.

\subsection{Proof of Theorem~\ref{thm:recovery}}
\label{app:proof-recovery}

\paragraph{Forward process.}
The variance-preserving forward SDE
\[
    dY_t=-Y_t\,dt+\sqrt2\,dB_t,
    \qquad Y_0=x,
\]
where $B_t$ is standard Brownian motion, has transition law
\[
    Y_t\overset{d}{=}e^{-t}x+\sqrt{1-e^{-2t}}\,Z,
    \qquad Z\sim\mathcal N(0,I).
\]
For $t>0$, setting $q=e^{2t}-1$ and $U=e^tY_t$ gives
\[
    U\overset{d}{=}x+\sqrt q\,Z.
\]
Thus the additive-noise formulation is a rescaling of this forward
process. We fix $q>0$ across cycles.

\paragraph{Assumption~\ref{assump:exact-score} .}
The clean law is $N^{-1}\sum_{j=1}^N\delta_{x_j}$, with distinct
$x_1,\ldots,x_N$ and $N\ge2$. Denoising is its exact stochastic reverse
diffusion to time zero, using the exact empirical score. Each cycle
uses fresh independent forward noise and reverse-sampling randomness.

For $q>0$, define
\begin{equation}
    w_i(u)=
    \frac{e^{-\|u-x_i\|^2/(2q)}}
         {\sum_j e^{-\|u-x_j\|^2/(2q)}},
    \qquad
    E_q(u)=-q\log\sum_j e^{-\|u-x_j\|^2/(2q)}.
    \label{eq:app-weights-energy}
\end{equation}
Here $E_q$ is the scaled negative log-density, up to an additive
constant, and
\[
    -\nabla E_q(u)=\sum_jw_j(u)x_j-u.
\]
Bayes' rule gives
$\mathbb P(X'=x_i\mid U=u)=w_i(u)$.
This endpoint law describes posterior sampling, rather than the
posterior mean or an arbitrary finite-step reverse sampler.

\paragraph{Theorem~\ref{thm:recovery} (restated).}
Fix $x_i$, write $d_{ij}=\|x_j-x_i\|$, and choose
\[
    0<4R<\min_{j\ne i}d_{ij}.
\]
Let $B_R=\{x:\|x-x_i\|\le R\}$. For $0\le r\le R$, define
\[
    A_i(r)=
    \min\left\{
        1,\frac12\sum_{j\ne i}
        \exp\left[-\frac{d_{ij}(d_{ij}-4r)}{8q}\right]
    \right\},
    \qquad
    h_i(x)=\mathbb E_Z[w_i(x+\sqrt q\,Z)].
\]
Then $h_i(x)\ge1-A_i(r)$ whenever $\|x-x_i\|\le r$.
Moreover, for $X_0=x\in B_R$ and every integer $M\ge1$,
\[
    \mathbb P_x(X_1=\cdots=X_M=x_i)
    =h_i(x)\,h_i(x_i)^{M-1}
    \ge[1-A_i(R)]\,[1-A_i(0)]^{M-1}.
\]

\begin{proof}
We first bound the probability of recovering a competing training point,
then use the Markov property to obtain the residence bound.

\medskip\noindent
\textit{(i) A two-point bound.}
Fix $j\ne i$ and write
\[
    L_k(u)=e^{-\|u-x_k\|^2/(2q)}.
\]
The denominator of $w_j(u)$ contains the two distinct positive terms
$L_i(u)$ and $L_j(u)$, so
\[
    w_j(u)\le\frac{L_j(u)}{L_i(u)+L_j(u)}.
\]
Writing $\tau=L_j(u)/L_i(u)>0$ and using
$1+\tau\ge2\sqrt\tau$ gives
\begin{equation}
    w_j(u)\le\frac{\tau}{1+\tau}
    \le\frac12\sqrt\tau
    =\frac12\sqrt{\frac{L_j(u)}{L_i(u)}}.
    \label{eq:proof-sqrt-bound}
\end{equation}

\medskip\noindent
\textit{(ii) Recovery after Gaussian noising.}
Expanding the squared norms yields
\[
    \|u-x_j\|^2-\|u-x_i\|^2
    =d_{ij}^2-2\langle x_j-x_i,u-x_i\rangle.
\]
Consequently, for $U=x+\sqrt q\,Z$,
\[
    \sqrt{\frac{L_j(U)}{L_i(U)}}
    =
    \exp\left[
        -\frac{d_{ij}^2}{4q}
        +\frac{\langle x_j-x_i,x-x_i\rangle}{2q}
        +\frac{\langle x_j-x_i,Z\rangle}{2\sqrt q}
    \right].
\]
The Gaussian identity
$\mathbb E[e^{\langle v,Z\rangle}]=e^{\|v\|^2/2}$
therefore gives
\begin{equation}
    \mathbb E\sqrt{\frac{L_j(U)}{L_i(U)}}
    =
    \exp\left[
        -\frac{d_{ij}^2}{8q}
        +\frac{\langle x_j-x_i,x-x_i\rangle}{2q}
    \right].
    \label{eq:proof-gaussian-moment}
\end{equation}
If $\|x-x_i\|\le r$, Cauchy--Schwarz implies
$\langle x_j-x_i,x-x_i\rangle\le d_{ij}r$, hence
\[
    \mathbb E\sqrt{\frac{L_j(U)}{L_i(U)}}
    \le
    \exp\left[-\frac{d_{ij}(d_{ij}-4r)}{8q}\right].
\]
Combining this with~\eqref{eq:proof-sqrt-bound} and summing over
$j\ne i$ gives
\[
    1-h_i(x)
    =\mathbb E\left[\sum_{j\ne i}w_j(U)\right]
    \le
    \frac12\sum_{j\ne i}
    \exp\left[-\frac{d_{ij}(d_{ij}-4r)}{8q}\right].
\]
Since also $1-h_i(x)\le1$, we conclude that
\[
    1-h_i(x)\le
    \min\left\{
        1,\frac12\sum_{j\ne i}
        \exp\left[-\frac{d_{ij}(d_{ij}-4r)}{8q}\right]
    \right\}
    =A_i(r).
\]
This proves the one-cycle recovery bound.

\medskip\noindent
\textit{(iii) Cyclic residence.}
Fresh independent randomness in each cycle gives the transition law
\[
    \mathbb P(X_{k+1}=x_i\mid X_k=y)=h_i(y).
\]
Although $X_0=x$ may lie outside the training set, every $X_k$ with
$k\ge1$ belongs to $\mathcal D$. By the Markov property,
\[
    \mathbb P_x(X_1=\cdots=X_M=x_i)
    =h_i(x)\prod_{k=2}^M h_i(x_i)
    =h_i(x)\,h_i(x_i)^{M-1},
\]
where the product is empty when $M=1$.
Applying part (ii) at radius $R$ to the first factor and at radius
$0$ to the subsequent factors gives
\[
    \mathbb P_x(X_1=\cdots=X_M=x_i)
    \ge[1-A_i(R)]\,[1-A_i(0)]^{M-1}.
\]
For $M=1$, the second factor is interpreted as $1$.
\end{proof}

\subsection{Experimental Verification}
\label{sec:proof-experimental}
We check whether the quantitative prediction of Theorem~\ref{thm:recovery} matches exact empirical cycling, and
whether a learned model exhibits corresponding recovery and
residence near training images. Defining,
$\Delta_i=\min_{j\ne i}\|x_j-x_i\|$.
Nearby initializations are displaced by $0.2\Delta_i$ toward the
target's nearest training neighbor, satisfying the theorem's
condition $4R_i<\Delta_i$ with $R_i=0.2\Delta_i$.
The experimental cycling level $\gamma$ specifies a fraction
of the forward schedule; its corresponding additive variance
is $q=(1-\bar\alpha_t)/\bar\alpha_t$.

\paragraph{Testing the quantitative prediction.}
We use an eight-dimensional bimodal GMM dataset containing
2048 training points and select 20 targets spanning
nearest-neighbor separations. Each exact cycle adds Gaussian
noise and samples a training point from the posterior weights
$w_j(U)$ defined above. For each target and noise level, we
estimate one-cycle return probabilities using 20,000 trials.
Separately, for each target and noise level, we simulate
2048 independent trajectories from nearby initializations
for up to 500 cycles, counting residence
only when every clean endpoint equals the original target.

This allows an independent comparison between measured
residence, the theorem's product prediction
$h_i(x)h_i(x_i)^{M-1}$ estimated from one-cycle trials, and
its lower bound $[1-A_i(R_i)][1-A_i(0)]^{M-1}$.
At $\gamma=0.05$, measured 500-cycle residence averages
$66.48\%$, closely matching the product estimate of $66.01\%$,
while the mean lower bound is $45.25\%$.
At $\gamma=0.10$, these values are $9.72\%$, $9.57\%$,
and $3.06\%$, respectively.
All averages are taken over target-wise probabilities.
Thus, the experiment supports the quantitative prediction
and shows that the bound gives a nontrivial guarantee
of residence in the tested low-noise regime.

\paragraph{Recovery and residence in a learned model.}
We next compare two checkpoints of the same U-Net(Adam Optimizer) trained
on 512 CelebA images: an early checkpoint at $\tau=2400$
and a late checkpoint at $\tau=130000$, with logged
memorization fractions of $0.69\%$ and $100.00\%$,
respectively. We report the comparison at $\gamma=0.05$,
using the same 20 separation-ranked targets at both
checkpoints and 16 trajectories per target from nearby
initializations. Outputs are passed directly into subsequent
cycles without projection onto the training set.

For the late checkpoint, mean relative reconstruction
distance $\|X_k-x_i\|/\Delta_i$ falls from $0.20$ initially
to $0.069$ after one cycle and remains approximately
$0.070$ after 100 cycles. For the early checkpoint, it
instead increases to $0.314$ after one cycle and $1.468$
after 100 cycles. Thus, on average over the tested initializations, the late checkpoint rapidly recovers proximity to the target and maintains a small reconstruction error, whereas the early
checkpoint's error grows. To distinguish accurate reconstruction from merely retaining
the same nearest training image, we measure uninterrupted
residence within a specified reconstruction tolerance.
At the prespecified sensitivity tolerance of $0.10\Delta_i$,
$86.88\%$ of late-checkpoint trajectories remain within
the target neighborhood at every endpoint through 100
cycles, compared with no observed survivors for the early
checkpoint. At the stricter primary tolerance of
$0.05\Delta_i$, late-checkpoint residence is $2.81\%$.
The contrast between these tolerances also highlights the
remaining reconstruction error where the late checkpoint's
mean distance from the target is approximately
$0.07\Delta_i$ after both one and 100 cycles.

Together, the experiments support the recovery and residence
mechanism of Theorem~\ref{thm:recovery}. Exact empirical
sampling corroborates its quantitative prediction. Under
the same perturbations and cycling noise, the fully
memorized checkpoint recovers proximity to training targets
and sustains it, whereas the checkpoint with low memorization
moves away. This contrast supports the interpretation of
memorized training images as locally stable basins under
cyclic denoising. The dependence on reconstruction tolerance
reflects approximate recovery in the learned model, while
the theorem provides a quantitative guarantee under exact
empirical reversal.

\section{Additional Results}
\label{app:additional-results}

\subsection{Results on the GMM}
\label{app:gmm-results}
 
The GMM has a known population score and low dimension ($d=8$), so every
quantity in the paper can be computed on it cheaply and, for the landscape
plots, exactly. We use it to check that the signatures observed on CelebA
and CIFAR-10 also appear in this controlled setting. All models are residual MLPs trained with full-batch SGD
(App.~\ref{app:finer-things-club}).
 
\paragraph{Geometric signatures across training.}
Figure~\ref{fig:gmm-training-dynamics} repeats the analysis of
Fig.~\ref{fig:training-dynamics} on the GMM. The held-out loss and
$D_{\mathrm{KL}}(P_\theta\|P_0)$ locate the overfitting transition
(Fig.~\ref{fig:gmm-training-dynamics}A). The normalized score divergence
$\widehat{\mathcal A}_1$ is close to zero and identical for training,
held-out, and generated samples in the generalized regime, and the
training and generated curves then fall well below the held-out curve
(Fig.~\ref{fig:gmm-training-dynamics}B). On the GMM, the lag between this separation and
the rise in $f_{\mathrm{mem}}$ is smaller than on CelebA.
As on CelebA, both occur later for larger $n$, and replotting against
$\tau/n$ aligns the onsets (inset). Basin volume separates in the same
direction: it grows by several orders of magnitude for training and
generated samples and does not grow for held-out samples
(Fig.~\ref{fig:gmm-training-dynamics}C). The generated curves follow the
training curves in both signatures.
 
\begin{figure*}[ht]
    \centering
    \includegraphics[width=\textwidth]{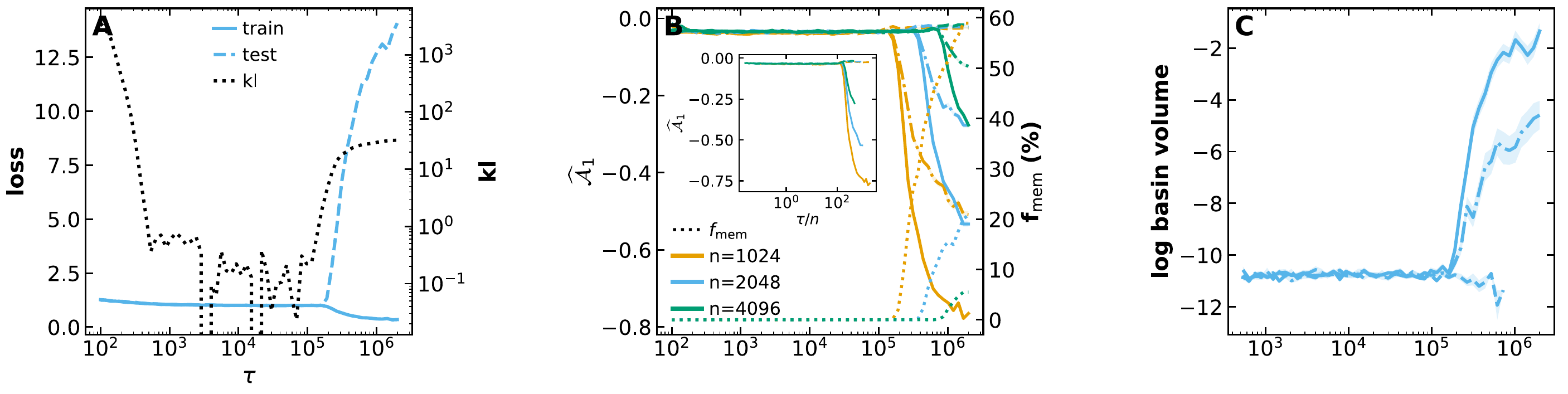}
    \vspace{-0.2in}
    \caption{\textbf{Geometric signatures across training on the GMM.}
    \textbf{(A)} Train loss (solid) and test loss (dashed) at $\gamma=0.01$
    on the left axis, and $D_{\mathrm{KL}}(P_\theta\|P_0)$ (dotted, log
    scale) on the right axis, versus training step $\tau$, for $n=2048$.
    \textbf{(B)} Mean normalized score divergence
    $\widehat{\mathcal A}_1$ on the left axis and $f_{\mathrm{mem}}$
    (dotted) on the right axis, versus $\tau$. Line style denotes the sample
    group: train (solid), test (dashed), generated (dash-dot). Colour
    denotes the training-set size: $n=1024$ (orange), $n=2048$ (blue),
    $n=4096$ (green). \emph{Inset:} the same divergence curves against
    $\tau/n$. \textbf{(C)} Mean log basin volume versus $\tau$ for $n=2048$,
    with the line styles of (B); bands show one standard deviation across
    samples.}
    \label{fig:gmm-training-dynamics}
\end{figure*}
 
\paragraph{Cyclic denoising from generated and training samples.}
Figure~\ref{fig:gmm-cyclic-population} applies CD to the $n=512$ model at
four checkpoints, from the generalized regime ($\tau=42{,}107$,
one-shot $f_{\mathrm{mem}}=0.01\%$) to the memorized regime
($\tau=864{,}044$, $85.47\%$). We start $512$ trajectories from fresh
generations and $512$ from the training points, and cycle them at eleven
amplitudes between $\gamma=0.001$ and $\gamma=0.35$.
 
Cycling exposes memorization that one-shot sampling misses. At
$\tau=97{,}466$, one-shot $f_{\mathrm{mem}}$ is $4.3\%$, and $100$ cycles
at $\gamma\in[0.02,0.06]$ raise it to about $20\%$
(Fig.~\ref{fig:gmm-cyclic-population}A). At $\tau=161{,}268$, cycling at
$\gamma=0.02$ raises it from $45\%$ to $94\%$. At large amplitudes
($\gamma\ge0.16$) $f_{\mathrm{mem}}$ stays near its one-shot value, so, as on
CelebA, basins become accessible only when the perturbation is small
enough. In the generalized regime no basin is reached at any amplitude,
and even trajectories started exactly on training points leave them: their
$f_{\mathrm{mem}}$ falls from $100\%$ to $0\%$ within ten cycles at every
$\gamma\ge0.04$ (Fig.~\ref{fig:gmm-cyclic-population}C). Training points
are therefore not stable states of this model. At later checkpoints, the
trajectories started from training points first lose some memorized states
and then return to the same level as those started from generated samples,
so both kinds of starting points reach the same set of basins.
 
The cosine similarity to the starting sample shows how far trajectories
travel (Fig.~\ref{fig:gmm-cyclic-population}B, D). At every checkpoint it
decays faster at larger $\gamma$, as expected from stronger perturbations.
With training, trajectories at small $\gamma$ stay closer to their start:
after $100$ cycles at $\gamma=0.02$, the similarity is $0.75$ at
$\tau=42{,}107$ and $1.00$ at $\tau=864{,}044$. This matches the
residence predicted by Theorem~\ref{thm:recovery} for small cycling
variance. At $\gamma=0.35$ the similarity falls to about zero at every
checkpoint, yet in the memorized regime $f_{\mathrm{mem}}$ remains near
$85\%$. The trajectories therefore keep landing on training points but no
longer on the one they started from. This is the large-$q$ limit of
Theorem~\ref{thm:recovery}, in which each cycle ends at an essentially
random training point.
 
\begin{figure*}[ht]
    \centering
    \includegraphics[width=\textwidth]{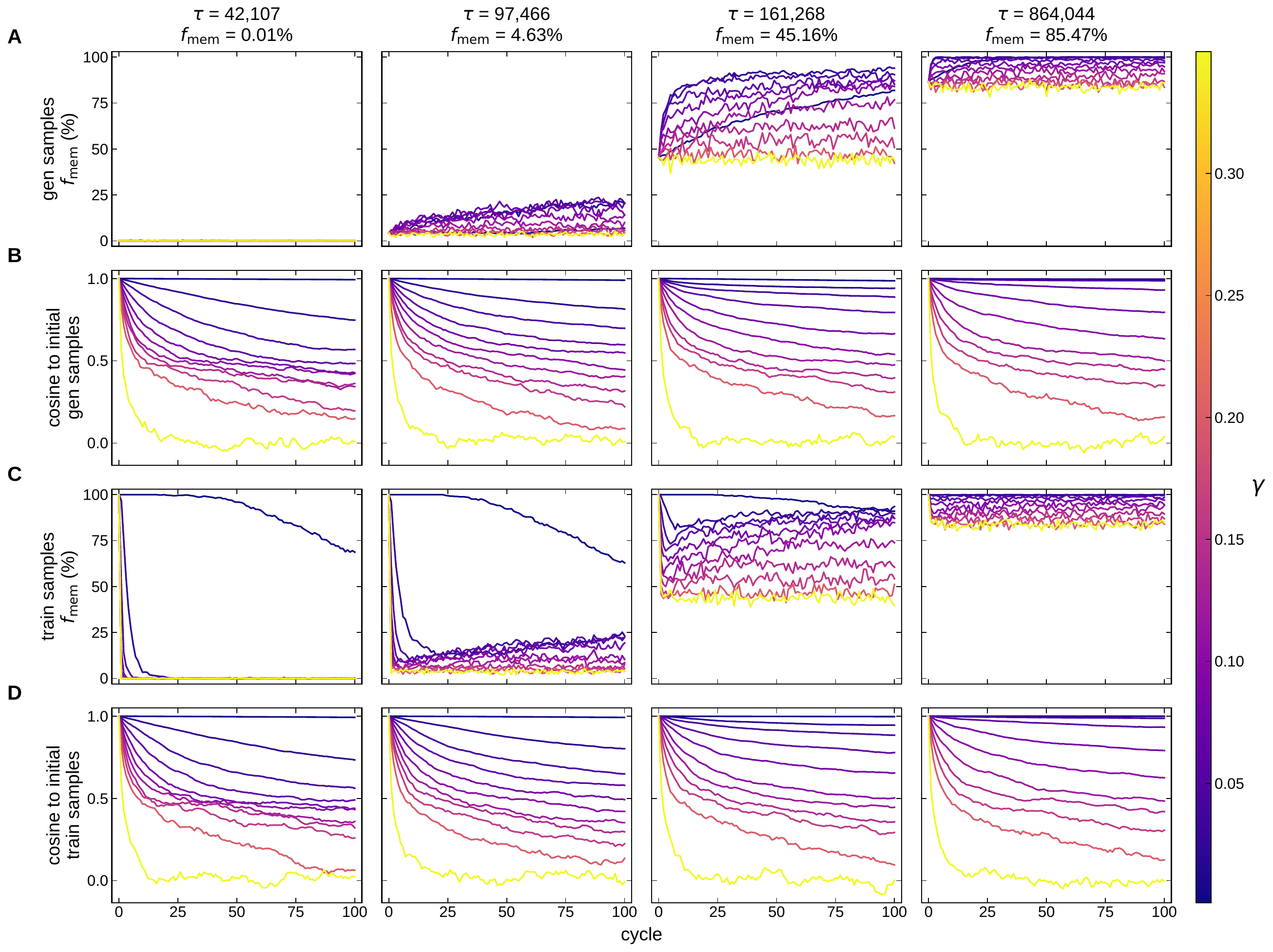}
    \vspace{-0.2in}
    \caption{\textbf{Cyclic denoising on the GMM across training.}
    GMM, $n=512$. Columns are checkpoints of increasing $\tau$, with their
    one-shot $f_{\mathrm{mem}}$. Each panel shows $512$ trajectories per
    amplitude, coloured by $\gamma$ ($0.001\le\gamma\le0.35$).
    \textbf{(A)} $f_{\mathrm{mem}}$ versus cycle for trajectories started
    from generated samples. \textbf{(B)} Mean cosine similarity between each
    state and its starting sample, computed on the raw $8$-dimensional vectors, for the same trajectories.
    \textbf{(C, D)} The same quantities for trajectories started from the
    training points. Cycling raises $f_{\mathrm{mem}}$ above its one-shot
    value at small $\gamma$ once basins have formed, while at
    $\tau=42{,}107$ even training points are not retained.}
    \label{fig:gmm-cyclic-population}
\end{figure*}
 
\paragraph{Trajectories on the score-divergence landscape.}
Figure~\ref{fig:gmm-cyclic-trajectories} follows a single trajectory on
the landscape. Because the GMM score network is cheap to differentiate, we
compute $\mathcal A_1$ exactly as the trace of the Jacobian of the learned
score at $t=1$, on a two-dimensional slice built as in
App.~\ref{app:landscape-viz}: a $200\times200$ grid
$x(\alpha,\beta)=x^\star+\alpha\,\mathbf d_1+\beta\,\mathbf d_2$.
Here the origin $x^\star$ is the midpoint of the two GMM components,
$\mathbf d_1$ is the direction $\boldsymbol{\mu}/\|\boldsymbol{\mu}\|$ joining the two component means,
and $\mathbf d_2$ is the leading principal direction of the training data
orthogonal to $\mathbf d_1$. This fixes the plane so that it contains
both components at every checkpoint and is the same across rows. Each cycle
state is projected onto the plane. The
landscape changes in both scale and shape during training. In the
generalized regime, $\mathcal A_1$ is smooth and small in magnitude, with
a ridge between the two GMM components. After memorization sets in,
its magnitude grows by two orders of magnitude and it breaks into many
small regions of strongly negative divergence, the localized basins
around individual training points. The trajectories follow the behaviour
of Fig.~\ref{fig:gmm-cyclic-population}. At small $\gamma$ they stay near
their start, and at $\tau=864{,}044$ with $\gamma=0.10$ the trajectory
moves into one such region and stays there. At large $\gamma$ they jump
between distant regions of the plane.
 
\begin{figure*}[ht]
    \centering
    \includegraphics[width=\textwidth]{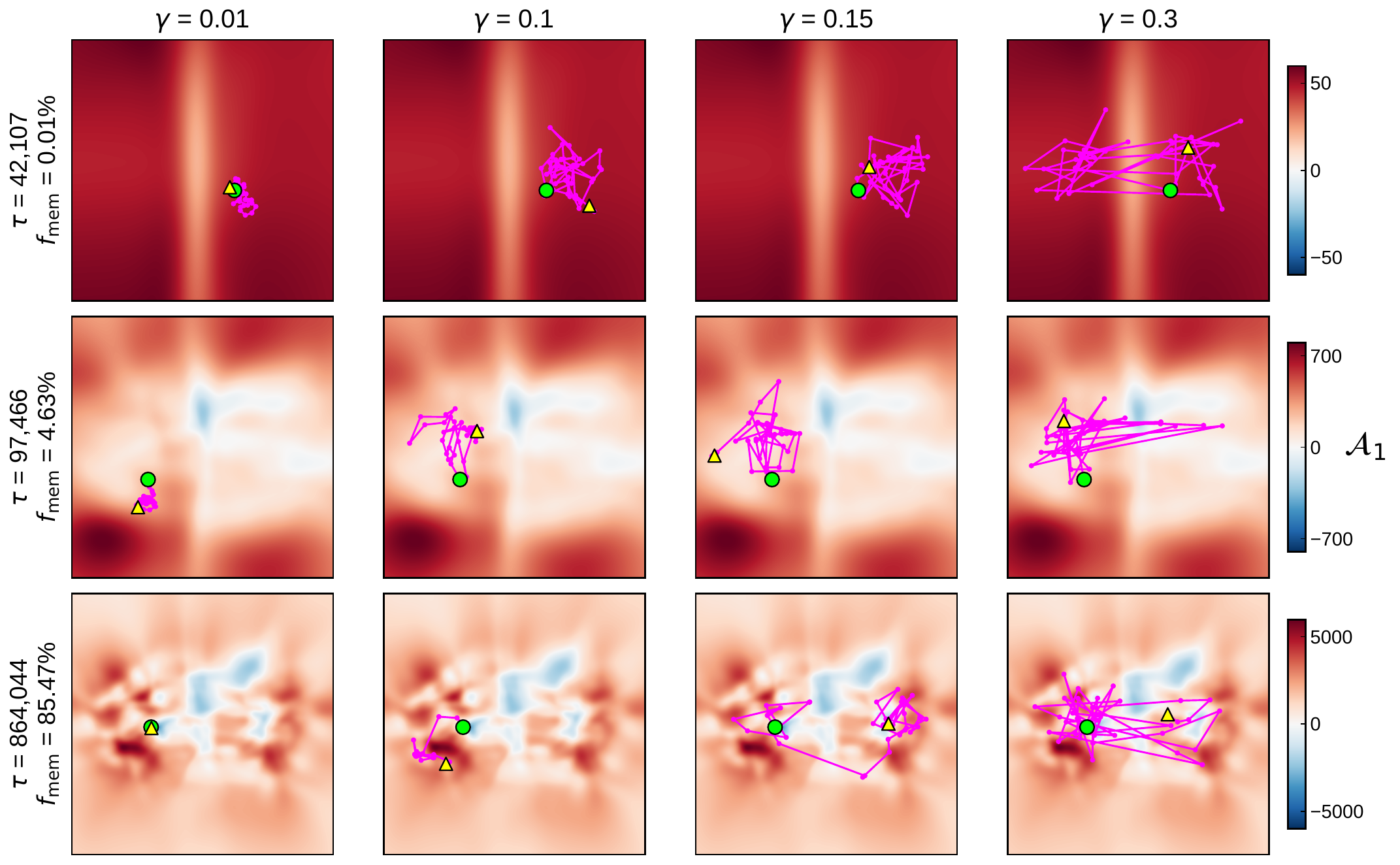}
    \vspace{-0.2in}
    \caption{\textbf{Cyclic-denoising trajectories on the GMM score-divergence
    landscape.} GMM, $n=512$. Rows are checkpoints of increasing $\tau$,
    with their one-shot $f_{\mathrm{mem}}$; columns are cycling amplitudes
    $\gamma$. The background is the exact score divergence
    $\mathcal A_1$ of the learned score on a two-dimensional plane spanned
    by the direction joining the two component means and the leading orthogonal principal
    direction of the training data; red marks negative (contracting)
    values. Each row has its own colour scale, since the range grows from
    about $\pm60$ to $\pm6{,}000$ during training. The magenta path joins
    the projected states of one trajectory every $5$ cycles over $200$
    cycles, from the start (green dot) to the end (yellow triangle).
    Because states are projected onto the plane, the path shows direction
    of motion rather than exact position on the background.}
    \label{fig:gmm-cyclic-trajectories}
\end{figure*}
\FloatBarrier

\subsection{Cyclic Denoising}
Here, we provide futher results of CD dynamics.

\paragraph{Degenerate attractors}
\label{sec:degenerate-attractors}
Figures~\ref{fig:cd-saturated}--\ref{fig:cd-celeba} illustrate
saturated colour blocks, stripe-like textures, nearly monochromatic
fields, and loss of facial detail. We refer to these as
\emph{degenerate attractor candidates}. We identify non-memorized
residence by thresholding high consecutive similarity,
together with the absence of memorization at every logged cycle. Consistent with the population analysis in
Fig.~\ref{fig:cyclic-population}, residence alone does not establish
memorization. On CIFAR-10 at $\gamma=0.05$, the fraction satisfying
this screen decreases from $50.87\%$ at
$\tau_{\mathrm{gen}}=49{,}500$ to $16.00\%$ at the
generalization-window checkpoint $\tau=83{,}000$, and to $0.98\%$
at $\tau=200{,}000$. This corresponds to a decrease of $49.89$
percentage points, or approximately a $98.1\%$ relative reduction. At $\tau_{\mathrm{gen}}=49{,}500$, the fraction of CIFAR-10
trajectories passing the same residence and non-memorization
screen falls from $50.87\%$ at $\gamma=0.05$ to $2.54\%$ at
$\gamma=0.10$, with none passing at $\gamma=0.20$ or $0.30$.
Alongside increasing $f_{\mathrm{mem}}$ under cycling, this is
consistent with residence shifting towards training-sample basins
as training progresses. Qualitatively similar degeneration
is visible on CelebA, where selected trajectories darken or become
washed out, losing facial detail under repeated CD
(Fig.~\ref{fig:cd-celeba}).

Their appearance ranges from stripes and nearly monochromatic
fields to configurations retaining recognizable objects.
The CelebA examples similarly illustrate that residence can accompany
substantial loss of facial detail without satisfying the memorization
criterion; they do not establish a monotonic decline across training.

Throughout, $f_{\mathrm{mem}}$ measures memorization rather than
residence. Where both are reported, we distinguish
$f_{\mathrm{mem}}$ for one-shot generation from
$f_{\mathrm{mem}}$ after $500$ CD cycles. These are population
measurements, separate from the displayed non-memorized trajectories.
Together, the results distinguish residence, characterized by
$S_{\mathrm{cyc}}^{(k)}$ and $\mathcal{A}_1$, from identification
with a training sample through $f_{\mathrm{mem}}$.
The declining share of degenerate attractors on CIFAR-10 accompanies
increasing memorization under cycling, while the mechanism by which
these basins arise and lose accessibility remains open.
For clarity, we distinguish the logged ordinary-generation checkpoint estimate, $f_{\mathrm{mem}}^{\mathrm{gen}}$, from the logged full-generated-cohort estimate after 500 cycles, $f_{\mathrm{mem}}^{\mathrm{CD}}(500)$. The percentages below are the recorded bootstrap mean estimates; they are not prevalence estimates for the visually selected examples.

\paragraph{Saturated geometric configurations.}
At $\tau=49{,}500$, CIFAR-10 trajectories evolve from recognizable images into saturated colour blocks and simplified geometric configurations (Figure~\ref{fig:cd-saturated}). Here, $f_{\mathrm{mem}}^{\mathrm{gen}}=0\%$ and $f_{\mathrm{mem}}^{\mathrm{CD}}(500)=0\%$. Thus, substantial visual degeneration occurs without triggering the pixel-distance memorization criterion. These six examples were selected by their highest late similarity among qualifying trajectories.

\begin{figure}[htbp]
\centering
\includegraphics[width=\linewidth]{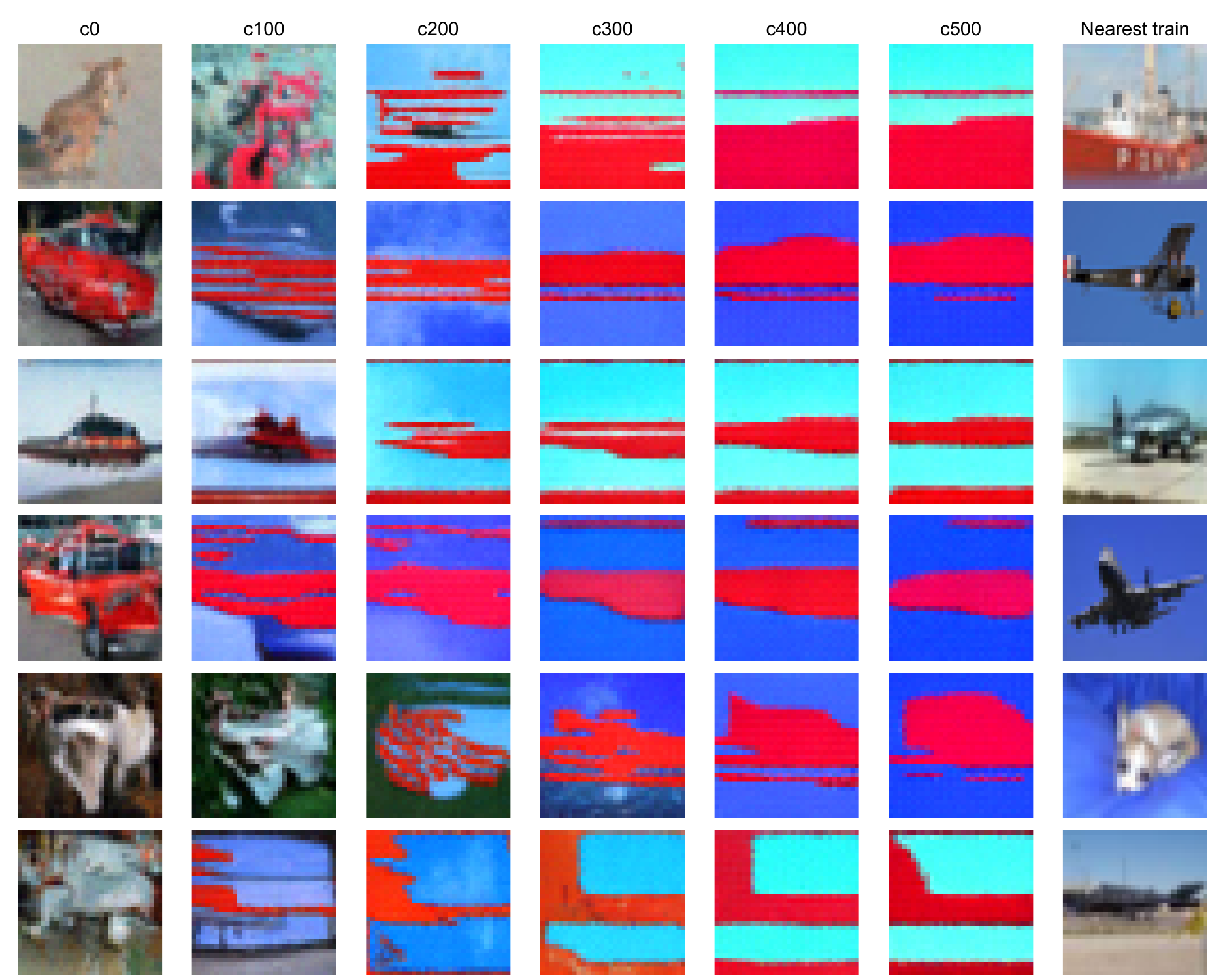}
\caption{\textbf{Saturated geometric degeneration on CIFAR-10.} $\tau=49{,}500$, $\gamma=0.05$; $f_{\mathrm{mem}}^{\mathrm{gen}}=0\%$ and full-cohort $f_{\mathrm{mem}}^{\mathrm{CD}}(500)=0\%$. Repeated CD replaces recognizable content with saturated bands and colour blocks. The final column shows the nearest training image to c500.}
\label{fig:cd-saturated}
\end{figure}

\paragraph{Fading structure and nearly monochromatic endpoints.}
At $\tau=83{,}000$, the selected CIFAR-10 trajectories lose object detail and end in low-contrast blue fields (Figure~\ref{fig:cd-fading}). The checkpoint has $f_{\mathrm{mem}}^{\mathrm{gen}}=0\pm0\%$, while the full generated cohort reaches $f_{\mathrm{mem}}^{\mathrm{CD}}(500)=3.2\%$. The displayed trajectories remain outside the memorization criterion. With high late cycles mean similarity should not be interpreted as proof that the final appearance has stabilized.

\begin{figure}[htbp]
\centering
\includegraphics[width=\linewidth]{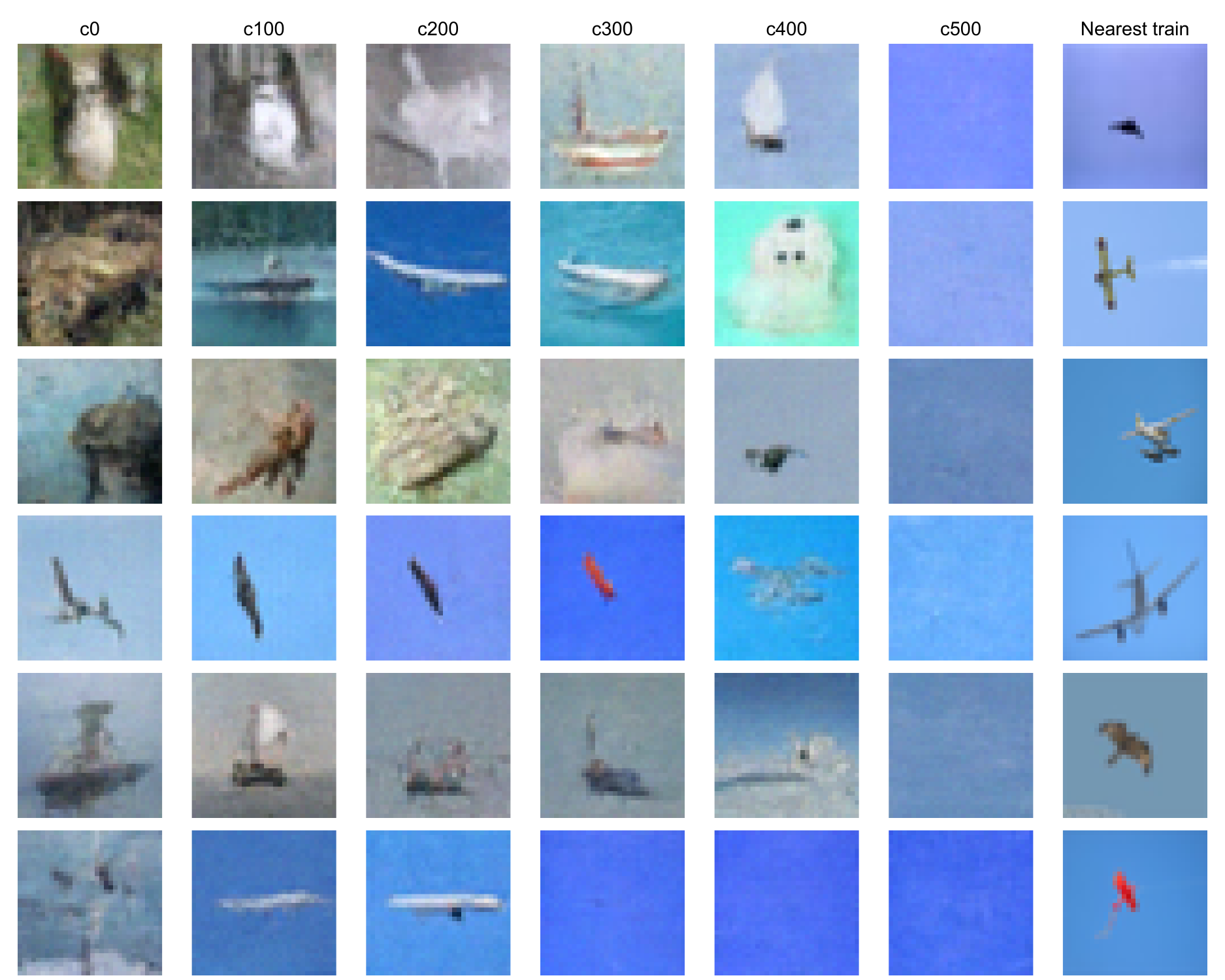}
\caption{\textbf{Fading structure on CIFAR-10.} $\tau=83{,}000$, $\gamma=0.05$; $f_{\mathrm{mem}}^{\mathrm{gen}}=0\%$ and full-cohort $f_{\mathrm{mem}}^{\mathrm{CD}}(500)=3.1921\%$. Six appearance-selected trajectories lose semantic structure and approach nearly monochromatic blue configurations.}
\label{fig:cd-fading}
\end{figure}

\paragraph{Stripe-like textures and colour fields.}
Additional trajectories at $\tau=49{,}500$ develop repeated stripes, colour bands, or nearly uniform fields (Figure~\ref{fig:cd-stripes}). Both $f_{\mathrm{mem}}^{\mathrm{gen}}$ and $f_{\mathrm{mem}}^{\mathrm{CD}}(500)$ are $0\%$. The intermediate states reveal the emergence of these patterns and subsequent changes in colour and contrast.

\begin{figure}[htbp]
\centering
\includegraphics[width=\linewidth]{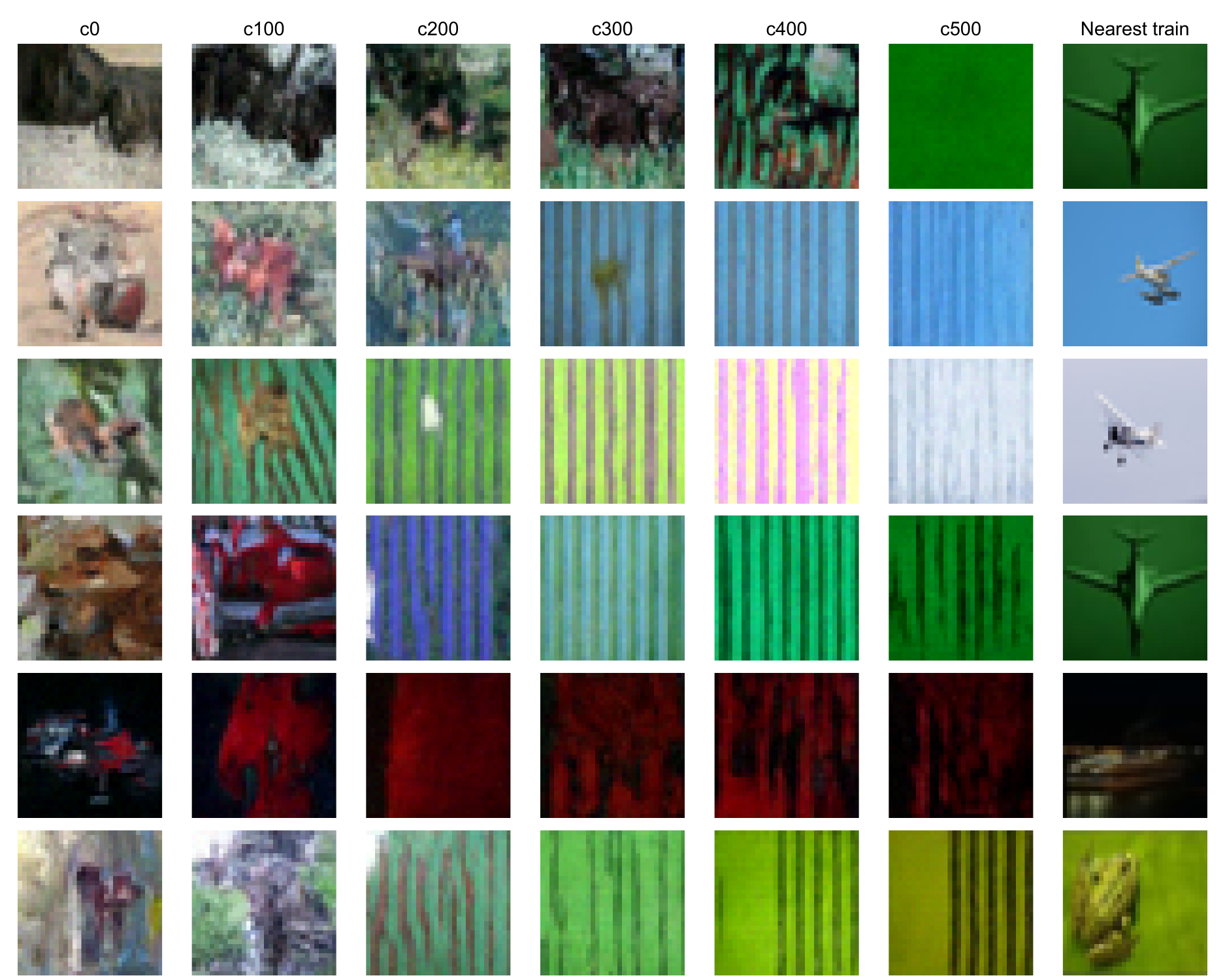}
\caption{\textbf{Stripe-like and monochromatic degeneration on CIFAR-10.} $\tau=49{,}500$, $\gamma=0.05$; $f_{\mathrm{mem}}^{\mathrm{gen}}=0\%$ and full-cohort $f_{\mathrm{mem}}^{\mathrm{CD}}(500)=0\%$. Trajectories develop repeated spatial bands or nearly monochromatic fields.}
\label{fig:cd-stripes}
\end{figure}

\paragraph{CelebA across training checkpoints.}
Figure~\ref{fig:cd-celeba} combines two examples from each of three checkpoints at $\gamma=0.05$. At $\tau=5{,}200$ (top pair), faces darken and lose detail; both memorization estimates are $0\%$. At $\tau=10{,}000$ (middle pair), the selected faces become bright and washed out, with $f_{\mathrm{mem}}^{\mathrm{gen}}=0\%$ and full-cohort $f_{\mathrm{mem}}^{\mathrm{CD}}(500)=9.8\%$. At $\tau=20{,}000$ (bottom pair), the selected outputs retain coherent facial structure, while the corresponding estimates are $0.1308\%$ and $21.1202\%$. All six displayed trajectories remain non-memorized under the stated rule at the logged cycles.

\begin{figure}[htbp]
\centering
\includegraphics[width=\linewidth]{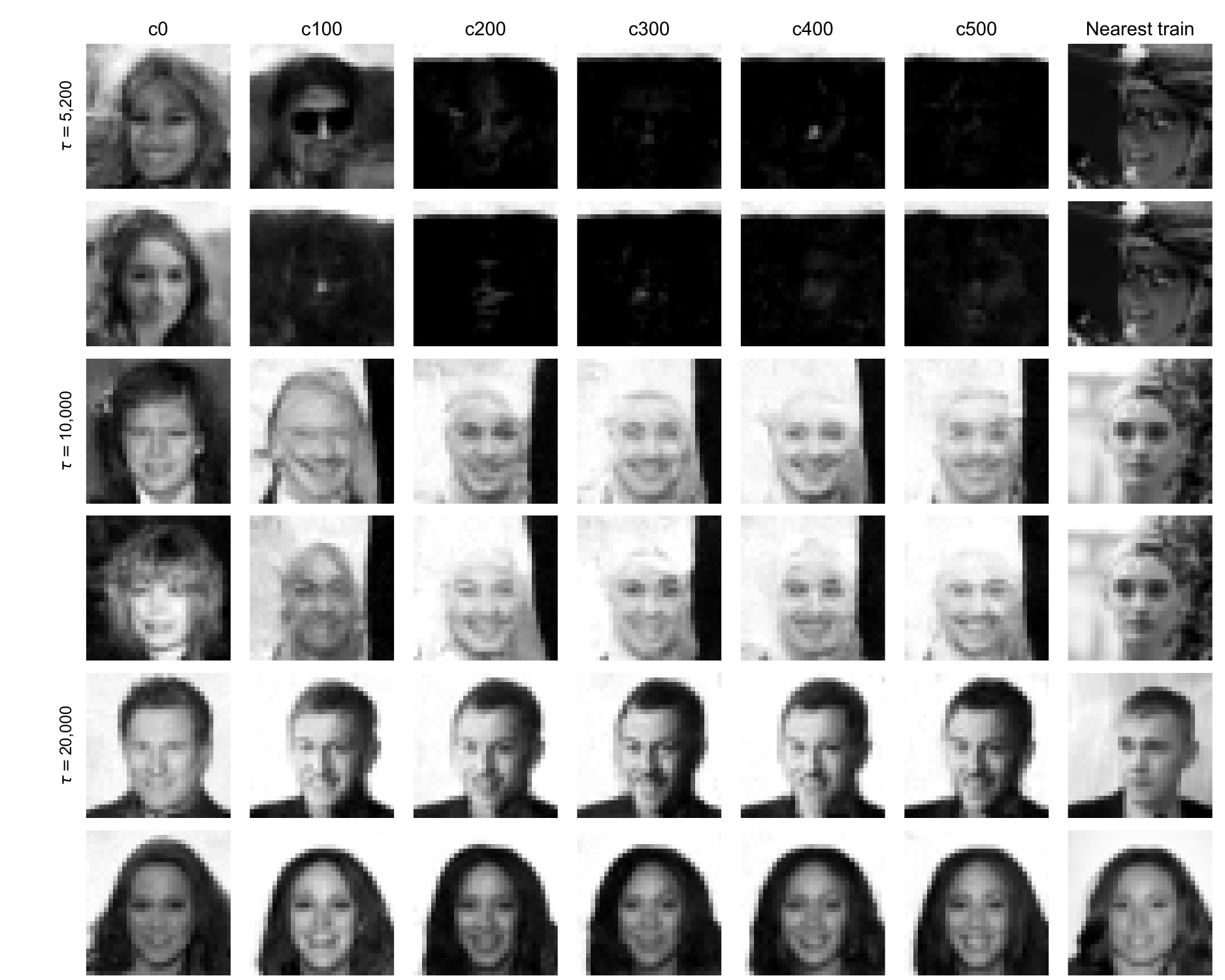}
\caption{\textbf{CelebA CD trajectories across training checkpoints, $\gamma=0.05$.} Top, middle and bottom pairs correspond to $\tau=5{,}200$, $10{,}000$ and $20{,}000$, respectively. Their logged $f_{\mathrm{mem}}^{\mathrm{gen}}$ values are $0\%$, $0\%$ and $0.1308\%$; full-cohort $f_{\mathrm{mem}}^{\mathrm{CD}}(500)$ values are $0\%$, $9.8\%$ and $21.12\%$. At $\tau=5200$ the images move towards mere monochromatic darkness and remain stable there. For $\tau=10000$ and $\tau=20000$ more facial structure emerges for these non memorized basins.}
\label{fig:cd-celeba}
\end{figure}

\begin{figure}[ht]
    \centering
    \includegraphics[width=0.8\linewidth]{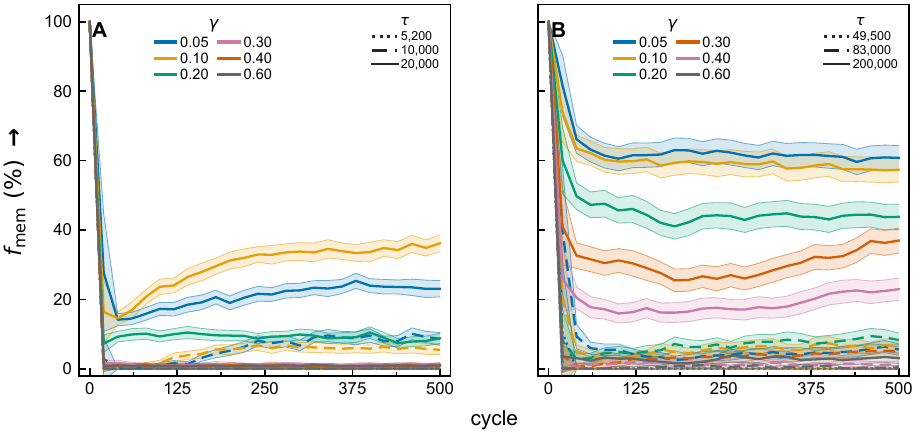}
    \caption{
    \textbf{Cyclic denoising from training samples.}
    Memorization fraction $f_{\mathrm{mem}}$ across 500 cycles for
    trajectories initialized from training images in
    \textbf{(A)} CelebA ($n=4096$, Adam, linear noise schedule) and
    \textbf{(B)} CIFAR-10 ($n=8192$, Adam, cosine noise schedule).
    Although $f_{\mathrm{mem}}$ starts at $100\%$, it drops sharply
    under cycling. At $\tau_{\mathrm{gen}}$, it remains near zero,
    whereas later checkpoints retain or recover a larger fraction,
    depending on $\gamma$. Classification is against the full training
    set and does not require returning to the initial training image.
   Shaded bands combine the reported
    bootstrap standard errors with variation within each cycle bin. The memory of the initial train samples can be lost under cycling, basin residence only remains in a subset of train samples that become attractors(subject to cycling amplitude). Cycling moves trajectories to other stronger attractor basins.
    }
    \label{fig:cyclic-train-fmem}
\end{figure}

\begin{figure*}[ht]
    \centering
    \includegraphics[width=\linewidth]{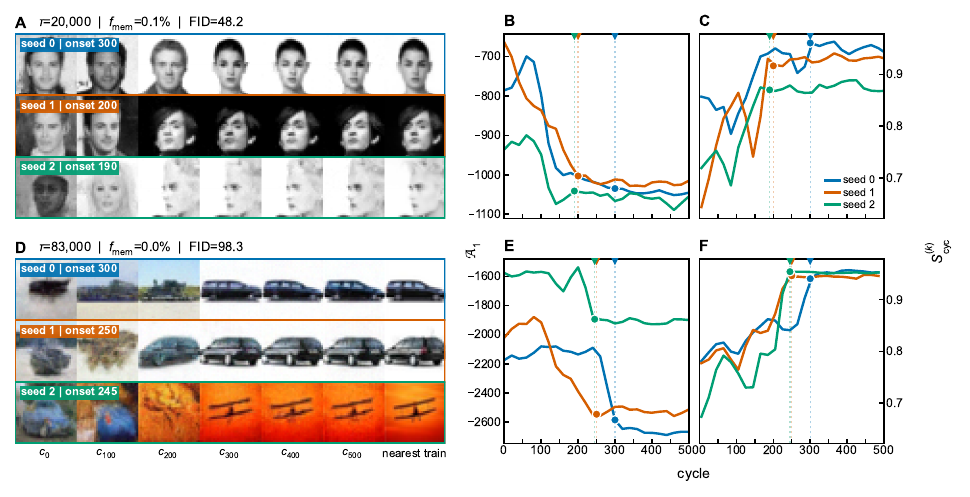}
 \caption{
\textbf{Trajectories discover and reside in basins corresponding to training samples.}
(\textbf{A}) CelebA trajectories ($n=4096$, Adam, $\gamma=0.10$)
from $c_0$ to $c_{500}$, with the nearest training image to each final
state. The marked onset is the first cycle at which a trajectory
satisfies the $f_{\mathrm{mem}}$ criterion and remains classified as
memorized for the rest of the run.
(\textbf{B}) Score divergence $\mathcal{A}_1$ decreases as the
trajectories approach these states; the decline can begin before the
marked onset.
(\textbf{C}) Consecutive-state LPIPS cosine similarity
$S_{\mathrm{cyc}}^{(k)}$ increases and reaches a high plateau, consistent
with residence after the approach.
(\textbf{D--F}) The corresponding CIFAR-10 trajectories
($n=8192$, Adam, $\gamma=0.05$) and measurements.
Colors identify trajectories across panels. Dotted lines and markers
show their onsets. Divergence and similarity are displayed with a
centered 50-cycle moving average. Annotations above \textbf{A} and
\textbf{D} give the training step $\tau$, one-shot
$f_{\mathrm{mem}}$, and FID.
}
    \label{fig:attractor-grid}
\end{figure*}
\begin{figure}[ht]
    \centering
    \includegraphics[width=\linewidth]{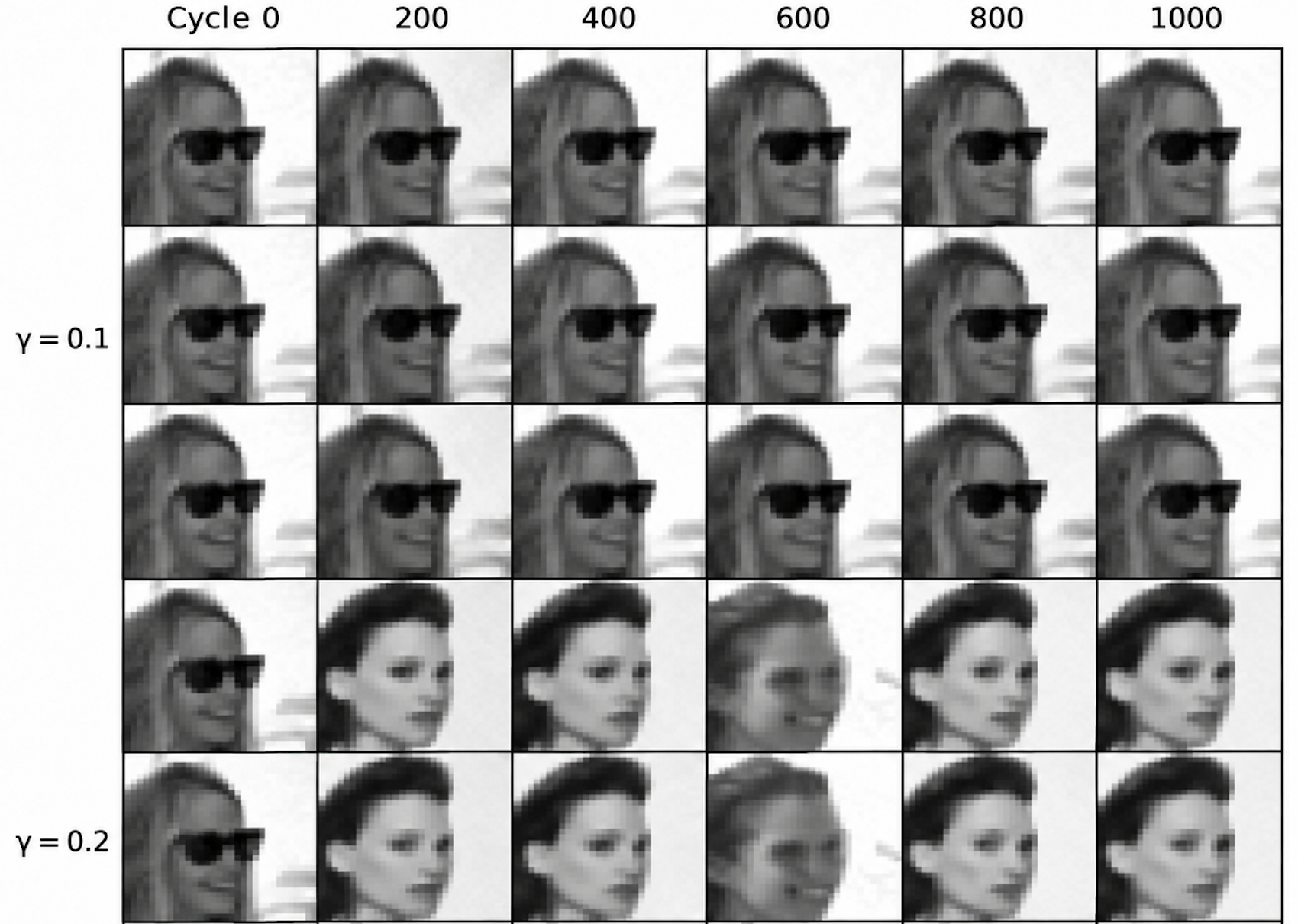}
    \caption{
        Basin residence and jumping under repeated cycling at
        $\tau = 40{,}000$
        (run: CelebA32, UNet, $n=4096$, Adam).
        Both sets begin in the basin associated with the same memorised
        training example. At $\gamma = 0.1$ (top), trajectories remain
        in this basin. At $\gamma = 0.2$ (bottom), stronger perturbations
        induce basin jumping, followed by sustained residence in another
        attractor, consistent with greater stability under cycling.
    }
    \label{fig:basin-jumping}
\end{figure}

\FloatBarrier

\subsection{Metrics}
\label{app:metrics}

\subsubsection{Memorization Fraction}
\label{app:metrics-fmem}

Following \citet{yoon2023diffusion,guMemorizationDiffusionModels2023,bonnaireWhyDiffusionModels2025}, we classify a
sample as memorized using its relative proximity to its two nearest
training neighbours. For a sample $x$, let $x_{(1)}$ and $x_{(2)}$
denote its nearest and second-nearest training neighbours under
Euclidean distance. Define
\begin{equation}
    r(x)=\frac{\|x-x_{(1)}\|_2}{\|x-x_{(2)}\|_2},
    \qquad
    m(x)=\mathbf{1}\!\left[r(x)<\kappa\right],
    \qquad \kappa=\frac{1}{3}.
\end{equation}
For an evaluated collection of $M$ samples, the memorization fraction
is the percentage satisfying this criterion:
\begin{equation}
    f_{\mathrm{mem}}
    =\frac{100}{M}\sum_{i=1}^{M}m(x_i).
\end{equation}
A small ratio indicates that a sample is substantially closer to one
training example than to its nearest competitor. The criterion does
not require exact equality with a training example.

For one-shot generation, we evaluate generated samples. During
cyclic denoising, we evaluate the trajectory states at each cycle.
For trajectories initialized from training samples, memorization
is measured against the full training set, not only the starting
image. An increase in $f_{\mathrm{mem}}$ therefore indicates that
more trajectories reach states satisfying the training-sample
memorization criterion; it does not measure the number of distinct
training samples reached.

We estimate uncertainty using $1{,}000$ bootstrap resamples of the
evaluated collection, each containing $M$ samples drawn with
replacement. The standard deviation of the bootstrap fractions
provides one standard error on $f_{\mathrm{mem}}$. 

\subsubsection{Cyclic Similarity}
\label{app:metrics-cyclic-similarity}

We measure perceptual similarity between consecutive saved trajectory
states using the VGG backbone of LPIPS. Images undergo the LPIPS
input preprocessing before feature extraction. Let $F_\ell(x)$
denote the feature map at layer $\ell$. We normalize its feature
vectors across channels at each spatial location, then flatten and
concatenate the normalized maps:
\begin{equation}
    \widetilde{F}_{\ell,h,w}(x)
    =
    \frac{F_{\ell,h,w}(x)}
    {\|F_{\ell,h,w}(x)\|_2+\epsilon},
    \qquad
    \Phi(x)
    =
    \operatorname{concat}_{\ell}
    \left[
        \operatorname{vec}\bigl(\widetilde{F}_{\ell}(x)\bigr)
    \right],
\end{equation}
where $\epsilon>0$ ensures numerical stability.
For trajectory $i$, let $x_i^{(k)}$ denote its state at the $k$th
saved observation. We define
\begin{equation}
    S_{\mathrm{cyc},i}^{(k)}
    =
    \frac{
        \left\langle
        \Phi(x_i^{(k)}),\Phi(x_i^{(k-1)})
        \right\rangle
    }{
        \|\Phi(x_i^{(k)})\|_2
        \|\Phi(x_i^{(k-1)})\|_2
    },
    \qquad
    S_{\mathrm{cyc}}^{(k)}
    =
    \frac{1}{M}\sum_{i=1}^{M}S_{\mathrm{cyc},i}^{(k)}.
\end{equation}
This statistic is cosine similarity in the
normalized LPIPS-VGG feature space.
Values close to one indicate small perceptual changes between saved
states. An increase followed by a sustained high plateau is consistent
with trajectories settling into basins that remain stable under the
cycling amplitude. This alone does not establish residence in a
training-sample basin, since degenerate attractors can also produce
high similarity. We therefore interpret $S_{\mathrm{cyc}}^{(k)}$
together with $f_{\mathrm{mem}}$ and score divergence.
For visualizations, we apply a centered moving average.

\subsubsection{Basin Volume}
\label{app:metrics-basin-volume}

We estimate the basin volume of a target sample
$\hat{x}\in\mathbb{R}^{d}$ using a deterministic DDIM recovery
sweep adapted from \citet{phamMemorizationGeneralizationEmergence2025}.
The sweep identifies the largest tested noise amplitude at which
forward-perturbed copies of $\hat{x}$ recover with sufficiently high
probability.

Let $Q_\gamma(\hat{x};\xi)$ denote the noise scheduler's forward
perturbation of $\hat{x}$ at amplitude $\gamma$, using an independent
draw $\xi\sim\mathcal{N}(0,I_d)$. Let
$\mathcal{D}^{\mathrm{DDIM}}_\theta(x_\gamma;\gamma)$ denote
deterministic DDIM denoising of $x_\gamma$ from that amplitude.
We test $\gamma\in\{0.05,0.10,\ldots,1.00\}$ in increasing order.
At each amplitude, we independently perturb $M=10$ copies of
$\hat{x}$, drawing fresh noise for every copy and every amplitude.

For copy $i$, write
\begin{equation}
x_\gamma^{(i)}
=
Q_\gamma(\hat{x};\xi_\gamma^{(i)}),
\qquad
\tilde{x}_\gamma^{(i)}
=
\mathcal{D}^{\mathrm{DDIM}}_\theta
\bigl(x_\gamma^{(i)};\gamma\bigr).
\end{equation}
Its recovery indicator and the empirical recovery probability are
\begin{equation}
R_\gamma^{(i)}
=
\mathbf{1}\!\left[
D_{\mathrm{rec}}
\bigl(\tilde{x}_\gamma^{(i)},\hat{x}\bigr)
\leq\epsilon_{\mathrm{rec}}
\right],
\qquad
p_{\mathrm{rec}}(\gamma)
=
\frac{1}{M}\sum_{i=1}^{M}R_\gamma^{(i)}.
\label{eq:basin-recovery-probability}
\end{equation}

For CelebA and CIFAR, $D_{\mathrm{rec}}$ is LPIPS with a VGG
backbone and $\epsilon_{\mathrm{rec}}=0.05$. Before evaluating
LPIPS, we invert the dataset normalization, clip pixel values to
$[0,1]$, and map them to $[-1,1]$. The grayscale CelebA images,
centre-cropped to $32\times32$, are replicated across three
channels for LPIPS; CIFAR images are already RGB. For the GMM
experiments, $D_{\mathrm{rec}}$ is Euclidean distance and
$\epsilon_{\mathrm{rec}}=0.4$.

The sweep stops at the first amplitude for which
$p_{\mathrm{rec}}(\gamma)<0.9$. We define the critical noise
amplitude $\gamma_c$ as the last preceding tested amplitude for
which $p_{\mathrm{rec}}(\gamma_c)\geq0.9$. Thus, at least nine of
the ten copies must recover for an amplitude to pass. We set
$\gamma_c=0$ if the first tested amplitude fails, and
$\gamma_c=1$ if every tested amplitude passes.

Once $\gamma_c$ is determined, we draw fresh noise
$\xi^\star\sim\mathcal{N}(0,I_d)$ and form a new perturbation
$x_{\gamma_c}=Q_{\gamma_c}(\hat{x};\xi^\star)$, with
$x_0=\hat{x}$. The ten copies in the recovery sweep determine
$\gamma_c$; they are not used to calculate the radius. We define
\begin{equation}
r_{\mathrm{rec}}(\hat{x})
=
\bigl\|\hat{x}-x_{\gamma_c}\bigr\|_2,
\qquad
V_{\mathrm{basin}}(\hat{x})
=
\frac{\pi^{d/2}}{\Gamma(d/2+1)}
\bigl[r_{\mathrm{rec}}(\hat{x})\bigr]^d.
\label{eq:basin-volume-estimate}
\end{equation}
Here, $\Gamma(\cdot)$ is the Euler gamma function.
The fresh perturbation $x_{\gamma_c}$ is used to measure
displacement and is not separately tested for recovery.
$V_{\mathrm{basin}}$ is an isotropic proxy of basin volume based on this estimated recovery amplitude.

\begin{algorithm}[t]
\caption{Recovery-based basin-volume estimate}
\label{alg:basin-volume}
\KwIn{Target $\hat{x}$; forward noising operation $Q$;
DDIM denoiser $\mathcal{D}^{\mathrm{DDIM}}_\theta$;
recovery distance $D_{\mathrm{rec}}$;
tolerance $\epsilon_{\mathrm{rec}}$}
\KwOut{$\gamma_c$, $r_{\mathrm{rec}}(\hat{x})$,
$V_{\mathrm{basin}}(\hat{x})$}

$M \gets 10$\;
$p_{\mathrm{stop}} \gets 0.9$\;
$\gamma_c \gets 0$\;

\For{$\gamma\in\{0.05,0.10,\ldots,1.00\}$}{
    \For{$i\gets 1$ \KwTo $M$}{
        Sample $\xi_i\sim\mathcal{N}(0,I_d)$\;
        $x_i\gets Q_\gamma(\hat{x};\xi_i)$\;
        $\tilde{x}_i\gets
        \mathcal{D}^{\mathrm{DDIM}}_\theta(x_i;\gamma)$\;
        $R_i\gets
        \mathbf{1}\!\left[
        D_{\mathrm{rec}}(\tilde{x}_i,\hat{x})
        \leq\epsilon_{\mathrm{rec}}
        \right]$\;
    }
    $p_{\mathrm{rec}}\gets
    \frac{1}{M}\sum_{i=1}^{M}R_i$\;
    \If{$p_{\mathrm{rec}}<p_{\mathrm{stop}}$}{
        \textbf{break}\;
    }
    $\gamma_c\gets\gamma$\;
}

\eIf{$\gamma_c=0$}{
    $x_{\gamma_c}\gets\hat{x}$\;
}{
    Sample fresh $\xi^\star\sim\mathcal{N}(0,I_d)$\;
    $x_{\gamma_c}\gets
    Q_{\gamma_c}(\hat{x};\xi^\star)$\;
}
$r_{\mathrm{rec}}(\hat{x})
\gets\|\hat{x}-x_{\gamma_c}\|_2$\;
$V_{\mathrm{basin}}(\hat{x})
\gets
\frac{\pi^{d/2}}{\Gamma(d/2+1)}
\bigl[r_{\mathrm{rec}}(\hat{x})\bigr]^d$\;
\textbf{return} $\gamma_c,\,
r_{\mathrm{rec}}(\hat{x}),\,
V_{\mathrm{basin}}(\hat{x})$\;
\end{algorithm}

\FloatBarrier

\subsubsection{Score Divergence and Its Estimation}
\label{app:score-divergence}

The pointwise divergence of the learned score field,
where defined, is $\nabla_x\cdot s_\theta(x,t)$.
We obtain the learned score from the model's noise
prediction $\epsilon_\theta$ as
\begin{equation}
s_\theta(x,t)
=
-\frac{\epsilon_\theta(x,t)}
{\sqrt{1-\bar{\alpha}_t}},
\label{eq:score-div-score}
\end{equation}
where $\bar{\alpha}_t$ is the cumulative
signal-retention coefficient of the forward noise
schedule. All quantities below are evaluated in the
data coordinates used by the diffusion model.

\paragraph{Evaluation point and boundary radius.}
For each original sample $x_0$, whether training,
held-out, or generated, we draw one forward perturbation
\begin{equation}
x_t
=
\sqrt{\bar{\alpha}_t}\,x_0
+
\sqrt{1-\bar{\alpha}_t}\,\epsilon,
\qquad
\epsilon\sim\mathcal N(0,I_d).
\label{eq:score-div-forward}
\end{equation}
The probing sphere is centred at $x_t$. Unless stated
otherwise, we use $t=1$, the final denoising timestep.
We calibrate the physical probing radius from all
$n_{\mathrm{train}}$ training samples:
\begin{equation}
R_{\mathrm p}
=
0.075\,
\operatorname{median}_{1\leq i\leq n_{\mathrm{train}}}
\left(
\min_{\substack{1\leq k\leq n_{\mathrm{train}}\\k\ne i}}
\|x_i-x_k\|_2
\right).
\label{eq:score-div-radius}
\end{equation}
The nearest-neighbour distances are computed between
unperturbed training samples in the model's data
coordinates. Thus the sphere probes the score field at
$7.5\%$ of the median training nearest-neighbour
distance. The implementation passes
a radius parameter equal to $R_{\mathrm p}/\sqrt d$
and multiplies it by $\sqrt d$ when forming boundary
points; their physical distance from $x_t$ is
$R_{\mathrm p}$.
The median nearest-neighbour distance provides a
data-dependent measure of typical training-sample
separation in the model's data coordinates. Setting
$R_{\mathrm p}$ to $7.5\%$ of this distance probes the
score field at a smaller spatial scale than typical
sample separation. A finite radius also retains a
measurable outward-flux signal: under local smoothness,
the mean flux is
$(R_{\mathrm p}/d)\nabla\cdot s_\theta(x_t,t)
+o(R_{\mathrm p})$ as $R_{\mathrm p}\to0$.
We use the same calibrated radius for training,
held-out, and generated samples throughout an
experiment, so their fluxes are compared at a common
spatial scale.

\paragraph{Reported score divergence.}
Following the boundary method of
\citet{brokmanIdentifyingMemorizationDiffusion2025},
we draw $J=64$ independent Gaussian vectors
$g_j\sim\mathcal N(0,I_d)$ and construct
\begin{equation}
e_j
=
\frac{g_j}{\|g_j\|_2},
\qquad
y_j
=
x_t+R_{\mathrm p}e_j.
\label{eq:score-div-boundary-points}
\end{equation}
Each $e_j$ is uniformly distributed on the unit sphere
and is the outward unit normal at $y_j$. We report the
unscaled mean outward learned-score flux
\begin{equation}
\mathcal A(x_0,t)
=
\frac{1}{J}\sum_{j=1}^{J}
s_\theta(y_j,t)\cdot e_j.
\label{eq:score-div-reported}
\end{equation}
We call this reported statistic \emph{score divergence}
for convenience and write
$\mathcal A_1(x_0)=\mathcal A(x_0,1)$.
The argument $x_0$ identifies the original sample;
the score evaluations are made on a sphere around
its forward-perturbed version $x_t$.

The relationship to mathematical divergence follows
when the score field satisfies the conditions for the
divergence theorem. Let $B_{R_{\mathrm p}}(x_t)$ be
the ball of radius $R_{\mathrm p}$ centred at $x_t$.
The sphere-area-to-ball-volume ratio is
$|\partial B_{R_{\mathrm p}}|/|B_{R_{\mathrm p}}|
=d/R_{\mathrm p}$, giving
\begin{equation}
\begin{aligned}
&\frac{1}{|B_{R_{\mathrm p}}|}
\int_{B_{R_{\mathrm p}}(x_t)}
\nabla_z\cdot s_\theta(z,t)\,\mathrm dz
\\
&\qquad=
\frac{d}{R_{\mathrm p}}\,
\mathbb E_{e\sim\mathrm{Unif}(\mathbb S^{d-1})}
\left[
s_\theta(x_t+R_{\mathrm p}e,t)\cdot e
\right].
\end{aligned}
\label{eq:score-div-flux-identity}
\end{equation}
Conditional on $x_t$,
Eq.~\eqref{eq:score-div-reported} is a Monte Carlo
estimate of the expectation on the right.
Multiplying it by $d/R_{\mathrm p}$ would give a
Monte Carlo estimate of the divergence averaged
over the ball. We report the unscaled value.
Because this factor is positive and fixed within
an experiment, omitting it preserves the signs
and ordering of samples compared within that
experiment. Numerical magnitudes from experiments
with different dimensions or radii do not have
the same scaling.

\paragraph{Normalized score divergence.}
Where $s_\theta(y_j,t)\ne0$, the unit-normalized
learned score is
\begin{equation}
\widehat{s}_\theta(y_j,t)
=
\frac{s_\theta(y_j,t)}
{\|s_\theta(y_j,t)\|_2}.
\label{eq:score-div-unit-score}
\end{equation}
We report its mean outward alignment,
\begin{equation}
\widehat{\mathcal A}(x_0,t)
=
\frac{1}{J}\sum_{j=1}^{J}
\widehat{s}_\theta(y_j,t)\cdot e_j.
\label{eq:score-div-normalized}
\end{equation}
We call this statistic \emph{normalized score
divergence} for convenience and write
$\widehat{\mathcal A}_1(x_0)
=\widehat{\mathcal A}(x_0,1)$.
Each summand lies in $[-1,1]$ where defined.
Negative values indicate that the boundary
scores point predominantly toward the centre
of the sphere.

\begin{algorithm}[t]
\caption{Monte Carlo computation of the reported score statistics}
\label{alg:score-divergence}
\KwIn{Sample $x_0\in\mathbb R^d$; noise predictor
$\epsilon_\theta$; schedule coefficient
$\bar{\alpha}_t$; calibrated radius
$R_{\mathrm p}$; timestep $t$}
\KwOut{$\mathcal A(x_0,t)$ and
$\widehat{\mathcal A}(x_0,t)$}

$J\gets64$\;
Sample $\epsilon\sim\mathcal N(0,I_d)$\;
$x_t\gets
\sqrt{\bar{\alpha}_t}\,x_0+
\sqrt{1-\bar{\alpha}_t}\,\epsilon$\;
$F\gets0$; $C\gets0$\;

\For{$j\gets1$ \KwTo $J$}{
    Sample $g_j\sim\mathcal N(0,I_d)$\;
    $e_j\gets g_j/\|g_j\|_2$\;
    $y_j\gets x_t+R_{\mathrm p}e_j$\;
    $s_j\gets
    -\epsilon_\theta(y_j,t)/
    \sqrt{1-\bar{\alpha}_t}$\;
    $F\gets F+s_j\cdot e_j$\;
    $C\gets C+
    (s_j\cdot e_j)/\|s_j\|_2$\;
}
$\mathcal A(x_0,t)\gets F/J$\;
$\widehat{\mathcal A}(x_0,t)\gets C/J$\;
\textbf{return}
$\mathcal A(x_0,t),\widehat{\mathcal A}(x_0,t)$\;
\end{algorithm}

For $n_{\mathrm{train}}=2048$, the physical probing
radii are $R_{\mathrm p}=0.102629$ for GMM,
$0.406240$ for CelebA, and $0.769862$ for CIFAR-10.
\FloatBarrier

\subsection{Membership Inference from Score Divergence}
\label{app:membership}
In Section~\ref{sec:exp-geometry} we saw that, as a model memorizes,
the normalized score divergence $\widehat{\mathcal{A}}_1$ around
training samples becomes markedly more negative than around held-out
test samples, while generated samples behave almost exactly like
training samples. This difference is large enough to decide, for a new
point, whether it belonged to the training set. Here we describe how we
turn this geometric signal into a simple membership inference test and
report its performance across training.

\paragraph{The membership score.}
For each candidate point $x$ we use the normalized score divergence
$\widehat{\mathcal{A}}_1(x)$, evaluated at a single small noise
amplitude at the final denoising step. Recall that
$\widehat{\mathcal{A}}_1$ is the mean outward alignment of the
unit-normalized score over a small sphere around $x$ (a mean of
cosines, bounded in $[-1,1]$); it is strongly negative where the score
field points inward, i.e.\ in the sharp basins that form around
memorized training samples. We use the same boundary estimator for both
datasets: we draw directions on a small sphere around $x$, evaluate the
learned score at those boundary points, normalize each score to unit
length, and average its alignment with the outward normal. For the GMM
the score is the exact learned score at $d=8$; for CelebA it is the
learned UNet score at $d=1024$. We flip the sign so that a higher score
means ``more likely a member'' and use this value directly as the
membership score.

\paragraph{No training labels required.}
The key property we exploit is that generated samples are statistically
indistinguishable from training samples under this score
(Figure~\ref{fig:membership-hist}). Generated samples can therefore
stand in for members whenever a reference distribution is needed, so
the procedure never requires access to the training set or its labels.

\paragraph{How we evaluate.}
We score an equal number of training samples (true members) and
held-out test samples (true non-members). We report the area under the
ROC curve (AUC), which is threshold-free and measures how well the
score separates the two groups on its own. We additionally report the
true positive rate (TPR) and false positive rate (FPR) at a moderate
$5\%$ false-positive budget, and the TPR at a strict $1\%$
false-positive budget (TPR@$1\%$FPR); all operating points are read
directly off the ROC curve, so no threshold needs to be chosen.

\paragraph{Results.}
Figure~\ref{fig:membership-hist} shows the distribution of the
membership score for training, test, and generated samples at
increasing training steps, for both GMM (top) and CelebA (bottom).
Early in training the three distributions overlap almost completely and
no rule can separate members from non-members. As training enters the
memorized regime, the training and generated distributions shift
together toward higher scores while the test distribution stays in
place, opening a clear margin. Tables~\ref{tab:membership-celeba}
and~\ref{tab:membership-gmm} quantify this: the AUC rises from close to
chance in the generalized regime to $1.0$ in the memorized regime, and
the TPR at $1\%$ FPR follows the same trend. The attack becomes
effective at the same training step at which the geometric signatures
of Section~\ref{sec:exp-geometry} first separate---well before
$f_{\mathrm{mem}}$ reflects memorization in the generated output.

\begin{figure}[H]
    \centering
    \includegraphics[width=\textwidth]{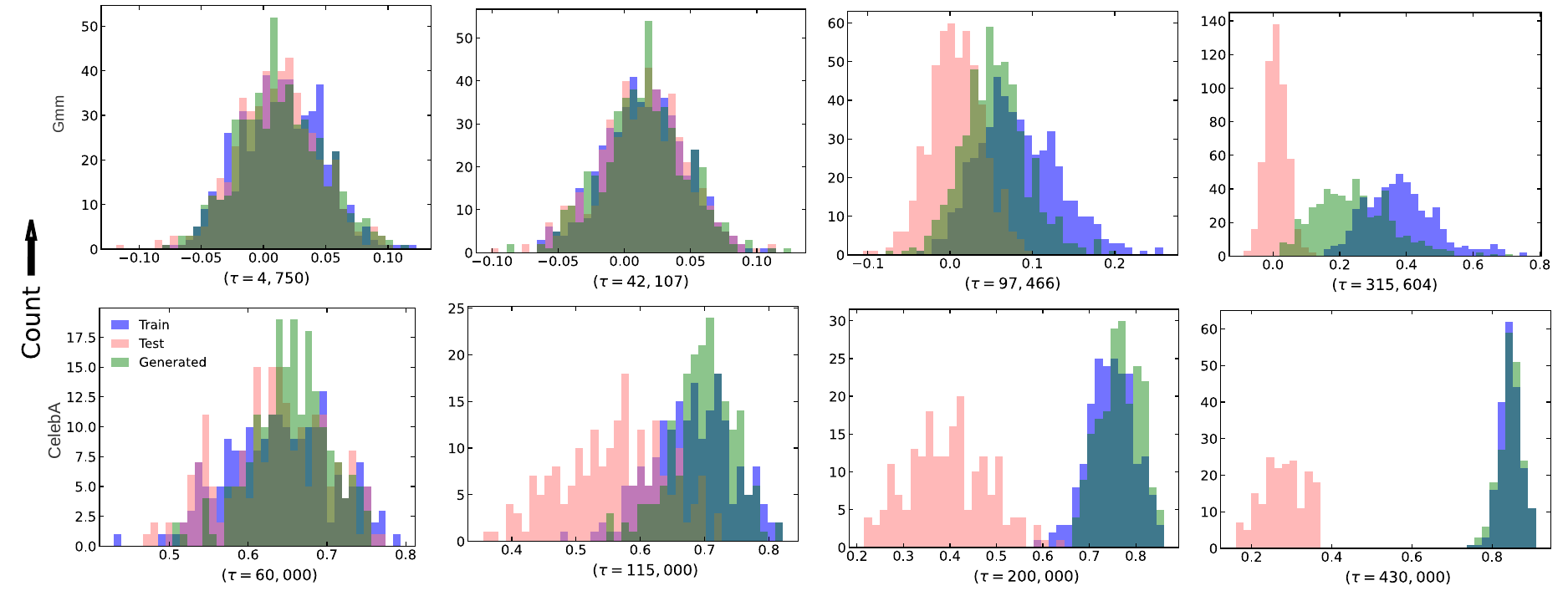}
    \caption{Distribution of the membership score (the sign-flipped
    normalized score divergence $\widehat{\mathcal{A}}_1$) for training
    (blue), held-out test (red), and generated (green) samples, at
    increasing training steps $\tau$ (left to right), for GMM (top row)
    and CelebA (bottom row). The three distributions overlap in the
    generalized regime and separate---with training and generated
    moving together, away from test---as the model memorizes.}
    \label{fig:membership-hist}
\end{figure}

\begin{table}[H]
\centering
\caption{Membership inference from the normalized score divergence
$\widehat{\mathcal{A}}_1$ at a single noise level, on \textbf{CelebA},
without any access to training labels. AUC is threshold-free. TPR and
FPR are reported at a moderate $5\%$ false-positive budget;
TPR@$1\%$FPR is the true-positive rate at a strict $1\%$ budget. All
operating points are read directly off the ROC curve.}
\label{tab:membership-celeba}
\begin{tabular}{rcccc}
\toprule
$\tau$ & AUC & TPR & FPR & TPR@1\%FPR \\
\midrule
60{,}000  & 0.522 & 0.875 & 0.835 & 0.035 \\
115{,}000 & 0.887 & 0.910 & 0.325 & 0.315 \\
200{,}000 & 1.000 & 0.925 & 0.000 & 0.995 \\
430{,}000 & 1.000 & 0.980 & 0.000 & 1.000 \\
\bottomrule
\end{tabular}
\end{table}

\begin{table}[H]
\centering
\caption{Membership inference from the normalized score divergence
$\widehat{\mathcal{A}}_1$ at a single noise level, on the \textbf{GMM}.
Conventions are identical to Table~\ref{tab:membership-celeba}.}
\label{tab:membership-gmm}
\begin{tabular}{rcccc}
\toprule
$\tau$ & AUC & TPR & FPR & TPR@1\%FPR \\
\midrule
4{,}750   & 0.531 & 0.066 & 0.047 & 0.012 \\
42{,}107  & 0.510 & 0.047 & 0.045 & 0.012 \\
97{,}466  & 0.909 & 0.607 & 0.049 & 0.490 \\
315{,}604 & 1.000 & 1.000 & 0.000 & 1.000 \\
\bottomrule
\end{tabular}
\end{table}

\subsection{Other Train Results}
\label{app:additional-train}
\begin{figure*}[ht]
    \centering
    \includegraphics[width=\linewidth]{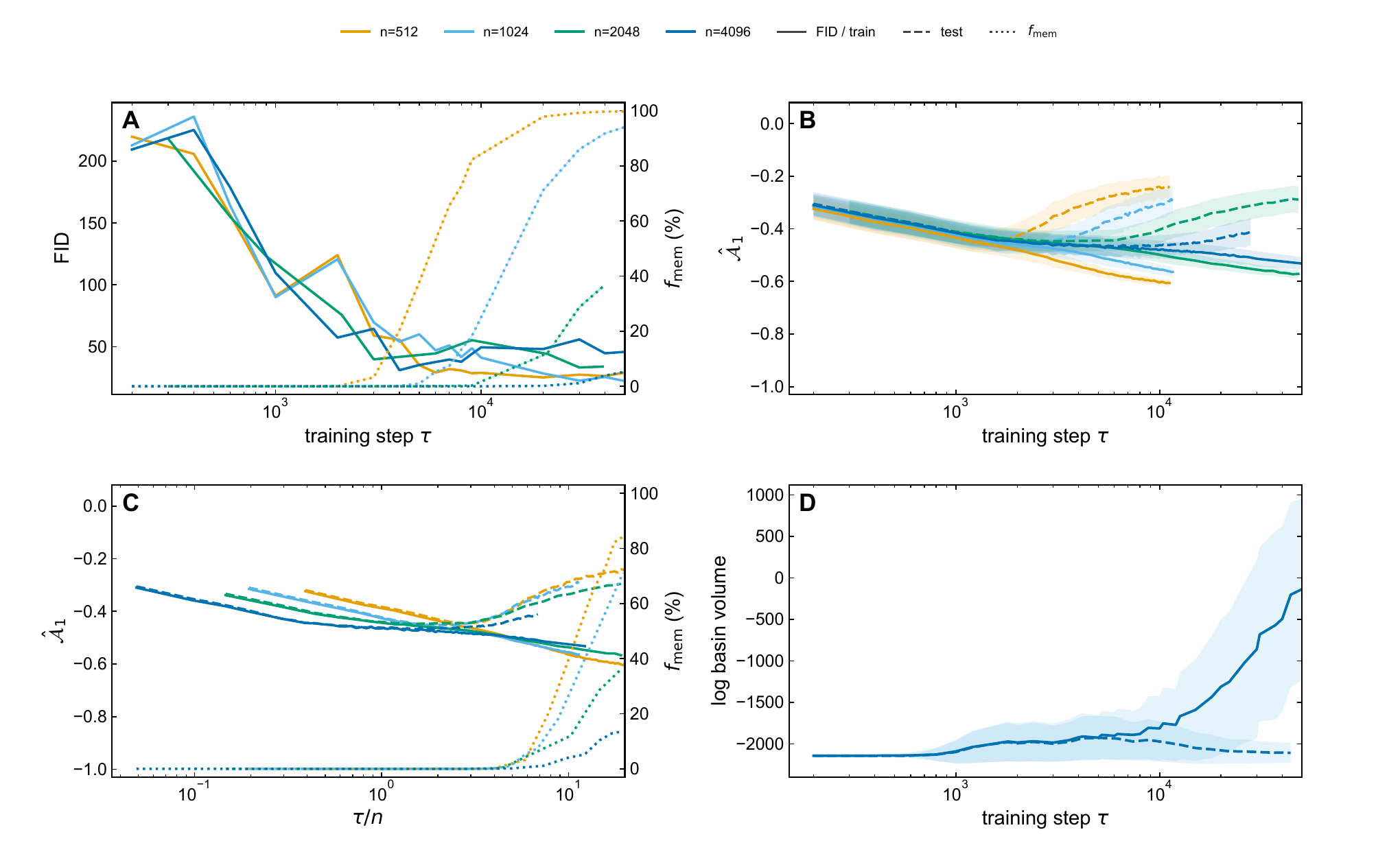}
 \caption{CelebA width-64 U-Net trained with Adam, number of training samples - \(n=512, 1024, 2048, 4096\). A: FID and memorization fraction, \(f_{\mathrm{mem}}\), versus training step \(\tau\). B: The normalized score divergence \(\hat{\mathcal A}_1\) for train and held-out test samples versus \(\tau\); bands show ±1 standard deviation across samples. Consistent lag between train-test \(\hat{\mathcal A}_1\) split and onset of $f_{mem}$ C: The same \(\hat{\mathcal A}_1\) curves and \(f_{\mathrm{mem}}\) versus normalized training steps \(\tau/n\). D: For the \(n=4096\) run, mean log basin-volume proxy for training and test images versus \(\tau\), with ±1 standard deviation. 
}
    \label{fig:adam-celeba}
\end{figure*}

\begin{figure}[ht]
    \centering
    \includegraphics[width=0.65\linewidth]{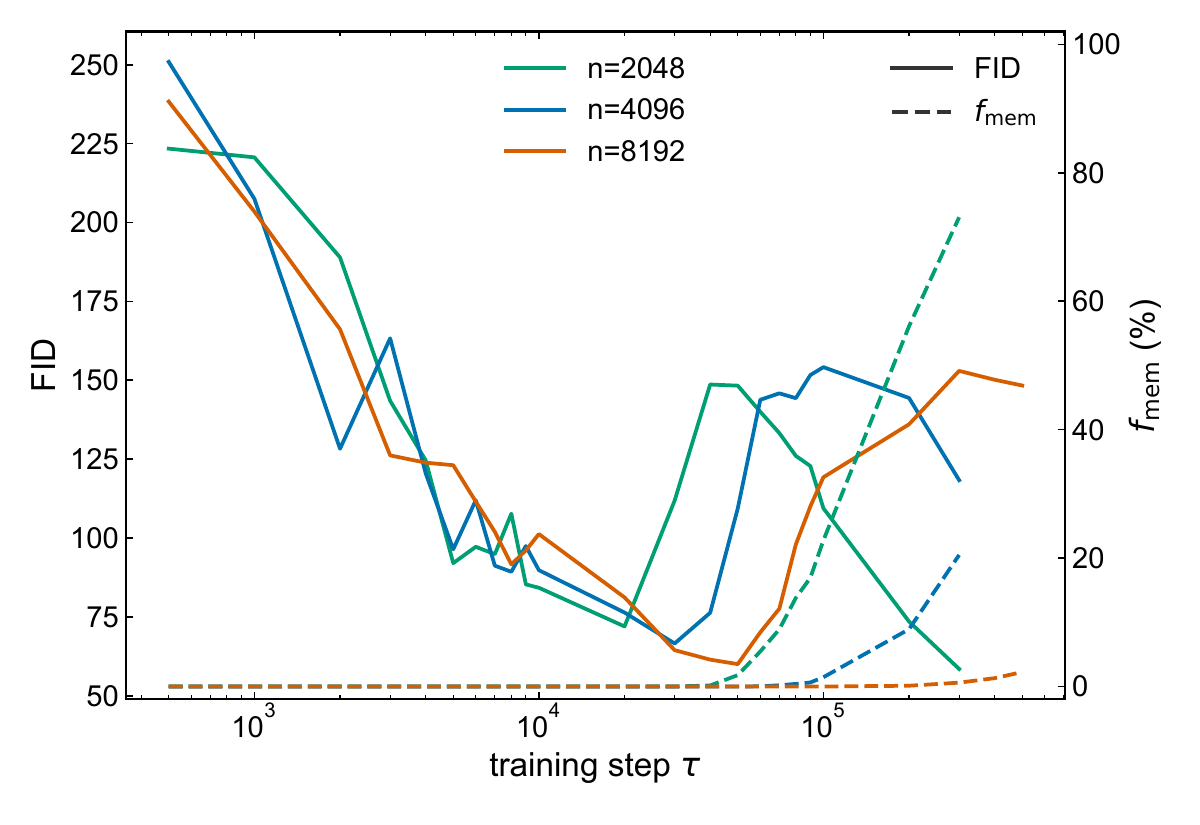}
 \caption{FID and memorization ratio of CIFAR-10 DiT-S runs with Adam optimizer and Cosine Noise Scheduler with n=2048, 4096,8192. 
}
    \label{fig:cifar-train}
\end{figure}
\FloatBarrier

\subsection{Visualizing the Score-Divergence Landscape}
\label{app:landscape-viz}

Figure~\ref{fig:landscape_cyclic}(a--b) visualizes the score-divergence
landscape on a two-dimensional slice through the high-dimensional
sample space. Because the score field lives in $d$ dimensions ($d=8$
for the GMM, $d=1024$ for CelebA), it cannot be plotted directly, so we
follow the low-dimensional projection approach of
\citet{bihani2024low}, originally developed for visualizing the energy
landscapes of atomic systems.

The idea is to pick two informative directions in the sample space and
evaluate the landscape on a grid spanned by them. We choose these
directions by principal component analysis, so that the plane we plot
captures as much of the relevant variation as possible rather than an
arbitrary slice. Concretely, we proceed in four steps:

\begin{enumerate}
    \item \textbf{Sample points.} We collect a set of points from the region of interest---training samples together with samples along their denoising trajectories---which trace out the part of the landscape we want to see.
    \item \textbf{Choose an origin and two directions.} We pick one
    point as the origin $x^\star$, form the displacement of every other
    sampled point from $x^\star$, and stack these displacements into a
    matrix. Its top two principal components $\mathbf{d}_1$ and
    $\mathbf{d}_2$ (the directions of largest variance) define the
    plane we plot.
    \item \textbf{Build a grid.} We form a grid of points
    $x(\alpha,\beta) = x^\star + \alpha\,\mathbf{d}_1 +
    \beta\,\mathbf{d}_2$ by varying the two interpolating coefficients
    $\alpha$ and $\beta$ over a fixed range.
    \item \textbf{Evaluate the landscape.} At every grid point we
    evaluate the score divergence $\mathcal{A}(x,t)$ and plot it as the
    height of the surface. Negative values (sharp, inward-pointing
    score) appear as wells, and near-zero values appear as flat regions.
\end{enumerate}

Choosing the plane by principal components ensures that the slice
reflects genuine structure in the landscape rather than an
uninformative random cut: the top two components account for most of
the variation among the sampled points, so the plotted plane passes
through the directions along which the landscape actually changes. The
resulting surface makes the difference between regimes directly
visible---broad and featureless in the generalized regime, and
punctuated by sharp wells around individual training samples in the
memorized regime.

\subsection{How the Score Function Evolves Across Training Regimes}
\begin{figure*}[!htbp]
    \includegraphics{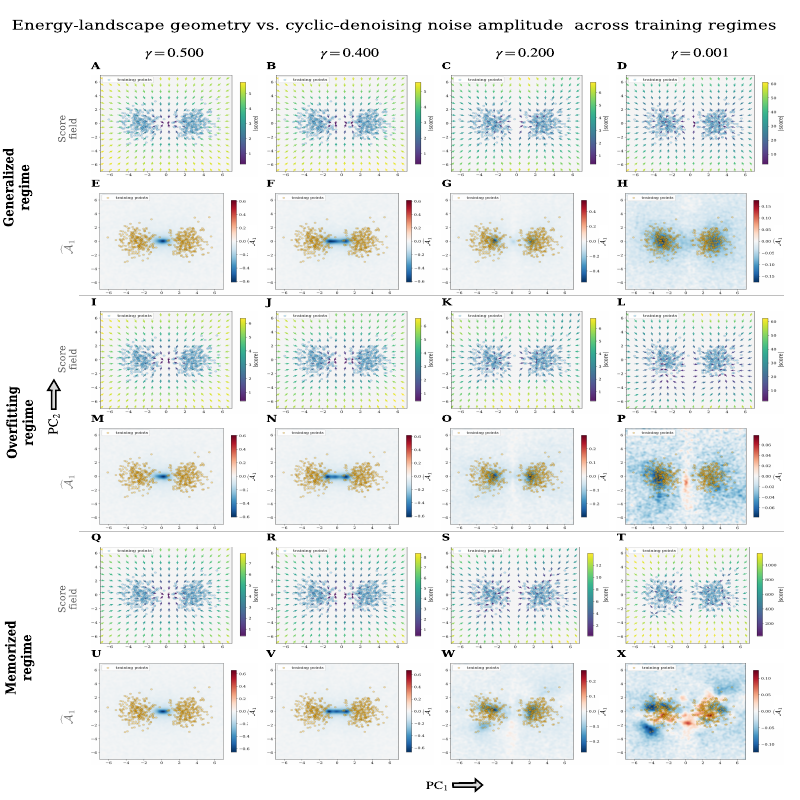}
    \caption{Visualised for the  GMM dataset - As $\gamma$ decreases from left to right, the two-basin structure sharpens from a single merged attractor into two well-separated basins. The generalization window (row 2) already shows some sign of attractor formation, but this sharpening becomes more pronounced and localized only in the memorized regime (row 3), compared to the generalized regime (row 1) -- directly quantifying its effect on the cyclic denoising dynamics.}
    \label{fig:score-func-gmm-gamma}
\end{figure*}

\begin{figure*}[ht]
    \centering
    \includegraphics[width = \textwidth]{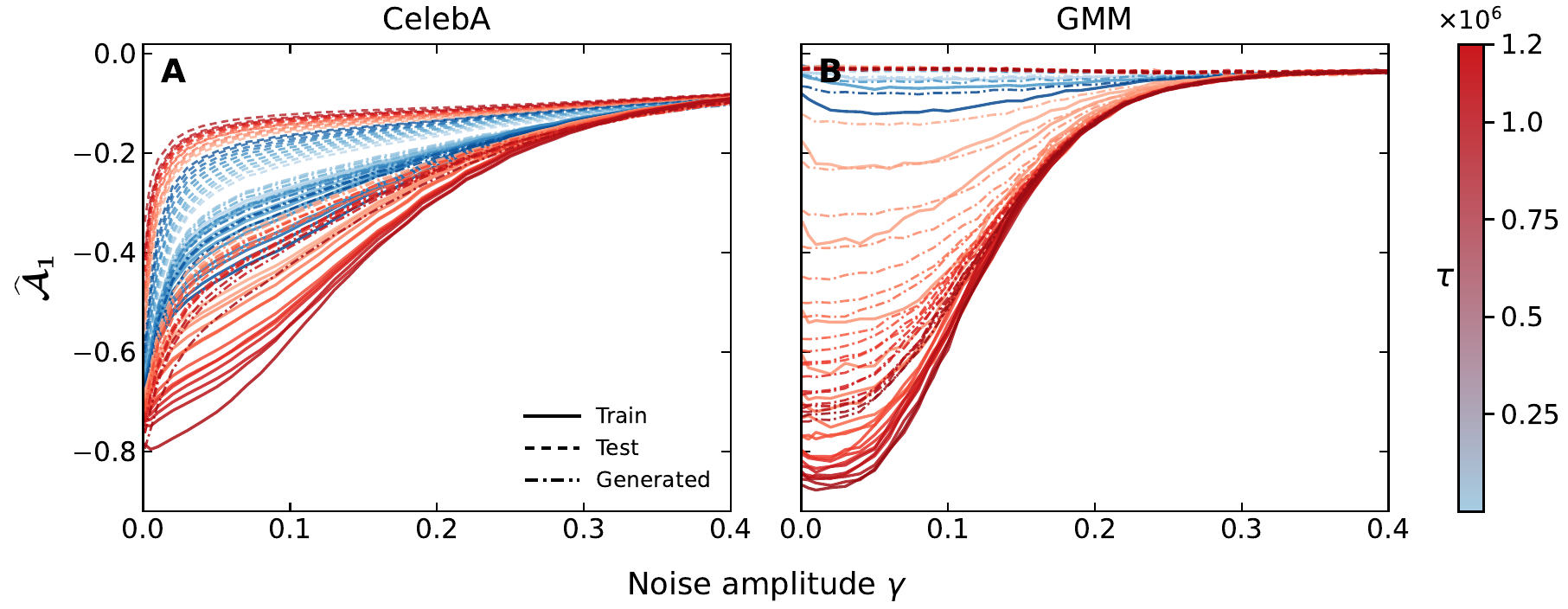}
    \caption{\textbf{Normalized score divergence versus noise amplitude
    $\gamma$ across all training checkpoints}, for CelebA
    \textbf{(a)} and the GMM \textbf{(b)}. Each curve is one checkpoint,
    coloured by training regime: blue for the generalized regime and
    orange for the memorized regime, with lighter to darker shades as the
    training step $\tau$ increases within each regime. Line style
    distinguishes train, test, and generated samples. In the generalized
    regime (blue) the curves are shallow and cluster together across all
    $\gamma$. In the memorized regime (orange) the normalized divergence is
    strongly negative at small $\gamma$ and rises toward zero as $\gamma$
    increases, with the effect deepening as $\tau$ grows.}
    \label{fig:regime-gamma-grid}
\end{figure*}
\FloatBarrier
\section{Finer Experiment Details}
\label{app:finer-things-club}
We train models from scratch on a synthetic Gaussian mixture,
CelebA, and CIFAR-10, continuing training until the memorization
fraction $f_{\mathrm{mem}}$ is high.
We evaluate checkpoints spanning pre-generalization,
generalization, and memorization.
For all three datasets, checkpoint sample generation uses
denoising diffusion implicit model (DDIM) sampling.

\paragraph{Synthetic data.}
We use an equally weighted two-component Gaussian mixture
in $d=8$ dimensions,
$P_0=\tfrac12\mathcal{N}(\boldsymbol{\mu},I_d)
+\tfrac12\mathcal{N}(-\boldsymbol{\mu},I_d)$,
with $\boldsymbol{\mu}=\mathbf{1}_d$.
The population score is available analytically.

The score network is a residual multilayer perceptron (MLP)
with three residual blocks and sinusoidal time embeddings. The standard model we used has $102{,}024$ parameters.
Training-set sizes are
$n\in\{128,256,512,1024,2048,4096\}$.
Models are trained using full-batch stochastic gradient descent
(SGD) with learning rate $6\times10^{-3}$ and momentum $0.95$,
for up to $4\times10^6$ optimization steps.
We use a linear noise schedule from $\beta_1=10^{-4}$
to $\beta_T=2\times10^{-2}$ over $T=1{,}000$ diffusion timesteps.
Generation uses DDIM with $200$ denoising steps.

\paragraph{CelebA.}
We centre-crop and downsample CelebA~\citep{7410782} to
$32\times32$ grayscale images, giving $d=1{,}024$.
We apply no data augmentation.
The score network is a denoising diffusion probabilistic model
(DDPM) U-Net with three resolution levels, channel multipliers
$\{1,2,4\}$, and attention at the two coarsest resolutions.
The standard model we used has $16$ million
parameters.

We train separate SGD and Adam models on subsets of size
$n\in\{512,1024,2048,4096\}$.
The SGD models use learning rate $0.01$, momentum $0.95$,
and batch size $B=\min(n,512)$, with no exponential moving
average (EMA) or learning-rate warm-up.
They are trained for at least $2\times10^6$ optimization steps.
The Adam models use learning rate $10^{-4}$,
$(\beta_1^{\mathrm{Adam}},\beta_2^{\mathrm{Adam}})
=(0.9,0.999)$, batch size $512$, and no weight decay.
They are trained for $2\times10^6$ optimization steps.

All models use a linear noise schedule from $\beta_1=10^{-4}$
to $\beta_T=2\times10^{-2}$ over $T=1{,}000$ diffusion timesteps.
Evaluation uses the same disjoint held-out set of $10{,}000$
images across all runs.
At each checkpoint, generation uses DDIM with $500$ denoising
steps.

\paragraph{CIFAR-10.}
We use CIFAR-10~\citep{krizhevsky2009learning} at its native
$32\times32$ RGB resolution, giving $d=3{,}072$.
We apply no data augmentation.
We train unconditional pixel-space diffusion transformers
(DiT-S/2)~\citep{Peebles2022ScalableDM}.
The architecture uses patch size $2$, hidden width $384$,
$12$ transformer blocks, $6$ attention heads, and an MLP
ratio of $4$, giving approximately $32.6$ million parameters.

We train separate models on subsets of size
$n\in\{2048,4096,8192\}$.
All models use Adam with learning rate $10^{-4}$,
$(\beta_1^{\mathrm{Adam}},\beta_2^{\mathrm{Adam}})
=(0.9,0.999)$, batch size $512$, and no weight decay.
They are trained for $2\times10^6$ optimization steps.
We use a cosine noise schedule
\citep{nicholImprovedDenoisingDiffusion2021}
with offset $s=0.008$ over $T=500$ diffusion timesteps.

Evaluation uses the same set of $7{,}952$ CIFAR-10 test images
across all runs.
The remaining $2{,}048$ test images are reserved for loss
evaluation.
At each checkpoint, we generate $10{,}000$ samples using DDIM
with $100$ denoising steps.

\paragraph{Stable Diffusion v1.4.}
We evaluate off-the-shelf Stable Diffusion v1.4 \citep{9878449} without retraining
or fine-tuning, in its $4\times64\times64$ latent space ($d=16{,}384$), with the
model's own scaled-linear noise schedule from $\beta_1=8.5\times10^{-4}$ to
$\beta_T=1.2\times10^{-2}$ over $T=1{,}000$ timesteps. Each chain is initialized
from a $50$-step DDIM generation at guidance scale $7.5$; every subsequent cycle
perturbs the latent to noise amplitude $\gamma$ and denoises it back with $10$ DDIM
steps. Chains run for $200$ cycles with $4$ seeds per prompt, and seeds are fixed
across $\gamma$, so that rows differing only in $\gamma$ share a $c_0$ image.

Prompts come from three sources. The individual chains of Fig.~\ref{fig:sd}A--C use
memorized prompts from \citep{hongMemBenchMemorizedImage2024}, which pairs
trigger prompts with the LAION images they reproduce; the six prompts used are
listed verbatim in Table~\ref{tab:sd-chains}. The population panels
Fig.~\ref{fig:sd}D--E use additional plain english prompts from \citep{hongMemBenchMemorizedImage2024} and prompts from \citep{webster2023reproducibleextractiontrainingimages}: In total $71$ completely
memorized (\textsc{mv}) and all $113$ non-memorized (\textsc{n}) prompts, together
with a fixed random subset of $150$ of the $339$ partially memorized (\textsc{tv})
prompts. Plain-English prompts, for which no training-image match exists, are the
first $150$ of the $499$ non-memorized prompts of MemBench. Where a reference
training image is available we measure $\mathrm{SSCD}$; this replaces
$f_{\mathrm{mem}}$, which requires access to the training set. Seed agreement
$\mathrm{SA}$ needs no reference image and is computed for every prompt. 

\begin{table}[H]\centering\small
\begin{tabular}{llrrrl}
\toprule
row & prompt & idx & $\gamma$ & seed\\
\midrule
\texttt{hippo}      & \makecell[l]{\texttt{Mothers influence on}\\[-2pt] \texttt{her young hippo}} & 1262 & 0.9 & 2\\
\texttt{laura}      & \texttt{insights with laura unknown}          &   67 & 0.1 & 2\\
\texttt{laura}      & \texttt{insights with laura unknown}          &   67 & 0.9 & 2\\
\texttt{air-con}    & \texttt{air conditioners set for}             &  215 & 0.9 & 1\\
\texttt{air-con}    & \texttt{air conditioners set for}             &  215 & 0.9 & 0\\
\texttt{label tape} & \makecell[l]{\texttt{set of nine brother tze 231}\\[-2pt]
\texttt{offlight white p-touch label}} & 662 & 0.9 & 2\\
\bottomrule
\end{tabular}
\caption{The six chains of Fig.~\ref{fig:sd}A, verbatim. \emph{idx} is the row index
in the MemBench SD v1.4 prompt table \citep{hongMemBenchMemorizedImage2024}, which
also supplies the matched training image. The two \texttt{laura} rows share a prompt, and seed differing only in $\gamma$; the two
\texttt{air-con} rows share a prompt and $\gamma$ and differ only in seed.}
\label{tab:sd-chains}
\end{table}

Trajectory-level measurements use a dense grid of $31$ cycles between $c_0$ and
$c_{200}$, with both $\mathrm{SSCD}$ and $\mathcal{A}_1$ evaluated at every one.
Population measurements use $15$ cycles for $\mathrm{SSCD}$ and $\mathrm{SA}$ and,
because the divergence probe dominates the cost, the subgrid
$\{0,10,50,100,200\}$ for $\mathcal{A}_1$; the $\gamma$ sweep covers $15$ amplitudes
from $0.005$ to $0.9$.

We estimate the unnormalized score divergence $\mathcal{A}$ with the forward-pass
boundary-flux estimator ($p{=}2$) of
\citet{brokmanIdentifyingMemorizationDiffusion2025}, probing at
$y=\sqrt{\bar\alpha_t}\,x$ on a sphere of radius $r=10^{-2}\sigma_t$ with $16$
directions arranged as $8$ antithetic pairs. We evaluate at $t=1$ and write
$\mathcal{A}_1$. Conditional and unconditional branches use the same probe points
and directions, and we report their difference $\Delta\mathcal{A}_1$.

For the detection benchmark of Appendix~\ref{app:SD_AUC} we instead cycle at
$\gamma=0.9$ for $100$ cycles with checkpoints at $c\in\{0,10,20,40,60,80,100\}$,
over $500$ memorized prompts from \citet{wenDetectingExplainingMitigating2024} as
positives and $2{,}000$ negatives drawn evenly from LAION, COCO-2017 val, Lexica,
and random CLIP-token strings, with $4$ seeds each and the same probe settings.

\paragraph{Computational resources.}
Experiments were run on NVIDIA A100 GPUs, 80\,GB GPU memory. UNet and DiT models were trained using data parallelism
across two GPUs per run. Cyclic denoising used one GPU per run;
trajectories evolved independently and were processed in batches.
The cyclic experiments spanned multiple checkpoints and noise
amplitudes on CelebA, CIFAR-10, and the GMM, with runs of up to
5{,}000 cycles. 

\section{Detection on Stable Diffusion}
\label{app:SD_AUC}

\begin{figure*}[ht]
\centering
\includegraphics[width=\textwidth]{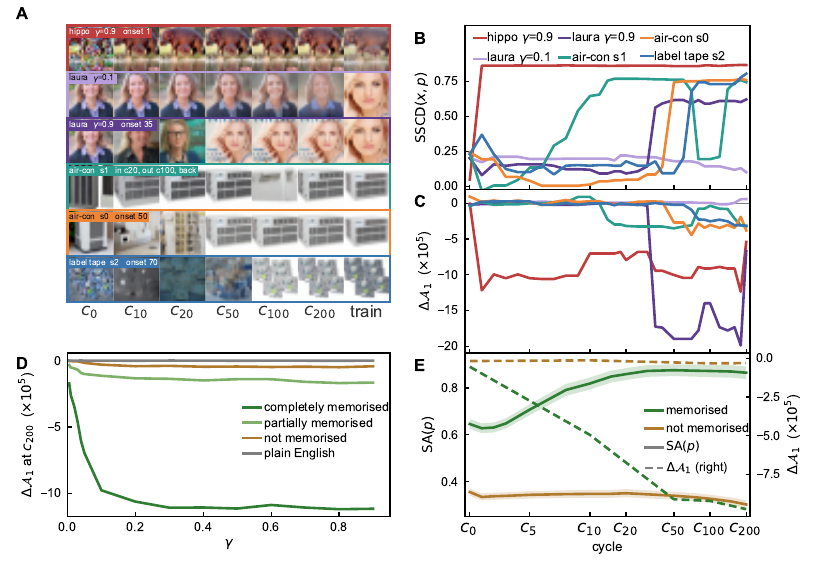}
\caption{Cyclic denoising on off-the-shelf Stable Diffusion v1.4. Each cycle noises
to amplitude $\gamma$ and denoises back; $c_k$ denotes the sample after $k$ cycles.
\textbf{(A)}~Six chains, one per row, with the matched LAION training image in the
\texttt{train} column; each row label gives the prompt, the setting that varies, and
the cycle at which the chain reaches its training image. The two \texttt{laura} rows
share a prompt, a seed and a $c_0$ image and differ only in $\gamma$; the
\texttt{air-con s1} row leaves its basin and re-enters it.
\textbf{(B,C)}~The same six chains per cycle: SSCD to the training image and
$\Delta\mathcal{A}_1$, on a shared axis. \textbf{(D)}~$\Delta\mathcal{A}_1$ after
$200$ cycles against $\gamma$, over $71$ completely memorised, $150$ partially
memorised and $113$ non-memorised prompts
\citep{webster2023reproducibleextractiontrainingimages} and $150$ plain-English
prompts \citep{hongMemBenchMemorizedImage2024}. \textbf{(E)}~Population means at
$\gamma=0.1$ over the $71$ completely memorised and $113$ non-memorised prompts,
$4$ seeds each: seed agreement (solid, left) and $\Delta\mathcal{A}_1$ (dashed,
right); bands are $\pm$SEM over prompts.}
\label{fig:sd}
\end{figure*}

\subsection{Data}
For below experiments and comparison with \citep{wenDetectingExplainingMitigating2024} we use the following prompts Positives: 500 memorised prompts from \cite{wenDetectingExplainingMitigating2024} Negatives: 2000 prompts, 500 each from
LAION, COCO-2017 val, Lexica, and random CLIP-token strings. Cycling at $\gamma=0.9$ with
10 DDIM steps per cycle; checkpoints at $c=0,10,20,40,60,80,100$.

\subsection{Uncertainty}
Intervals are from a stratified percentile bootstrap over prompts: positives and negatives
resampled independently with replacement, preserving the 500/2000 split, $B=4000$
resamples, 95\% percentile intervals. The same resample indices are reused across cycles and
across statistics, so all differences quoted are paired.

\subsection{Results by cycle}
\begin{table}[H]\centering\small
\begin{tabular}{rcc}
\toprule
cycle & AUC (95\% CI) & TPR@1\%FPR (95\% CI)\\
\midrule
0 & 0.811 [0.787, 0.836] & 0.526 [0.482, 0.572] \\
10 & 0.890 [0.864, 0.914] & 0.804 [0.766, 0.838] \\
20 & 0.935 [0.916, 0.953] & 0.834 [0.804, 0.872] \\
40 & 0.931 [0.910, 0.950] & 0.858 [0.824, 0.892] \\
60 & 0.944 [0.926, 0.961] & 0.866 [0.830, 0.900] \\
80 & 0.934 [0.914, 0.953] & 0.856 [0.820, 0.888] \\
100 & 0.940 [0.922, 0.957] & 0.852 [0.812, 0.884] \\
\bottomrule
\end{tabular}
\caption{$\Delta A$ at $\gamma=0.9$. Paired difference between $c=60$ and $c=40$:
$+0.016$, 95\% CI $[-0.010,+0.041]$.}
\end{table}

\begin{figure}[ht]\centering
\includegraphics{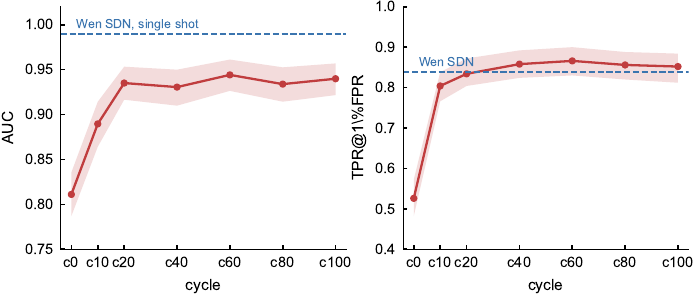}
\caption{AUC and TPR@1\%FPR against cycle count, with bootstrap bands.}
\end{figure}

\subsection{Comparison}
All rows use the same prompts and the same seeds.

\begin{table}[H]\centering\small
\begin{tabular}{lccc}
\toprule
statistic & AUC (95\% CI) & TPR@1\%FPR (95\% CI) & AP\\
\midrule
$\Delta A$  (ours) & 0.944 [0.926, 0.961] & 0.866 [0.830, 0.900] & 0.933 \\
$A_{\mathrm{cond}}$ alone & 0.948 [0.939, 0.957] & 0.194 [0.142, 0.246] & 0.782 \\
$A_{\mathrm{uncond}}$ alone & 0.899 [0.884, 0.914] & 0.066 [0.030, 0.094] & 0.623 \\
Wen SDN (single shot) & 0.990 [0.986, 0.993] & 0.838 [0.786, 0.882] & 0.970 \\
\bottomrule
\end{tabular}
\caption{At $c=60$, $\gamma=0.9$. Paired differences, $\Delta A$ minus SDN:
AUC $-0.0456$, 95\% CI $[-0.0633, -0.0287]$; TPR@1\%FPR $+0.0265$, 95\% CI $[-0.0260, +0.0841]$.}
\end{table}

\begin{figure}[H]\centering
\includegraphics{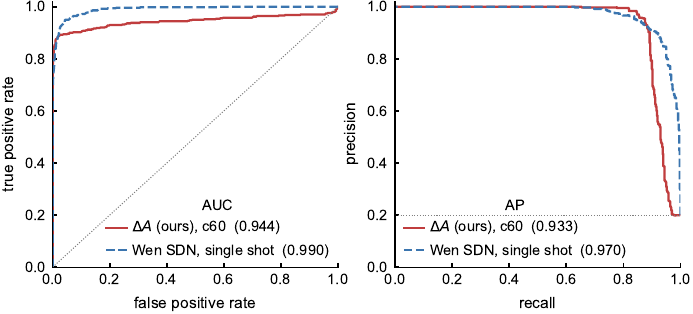}
\caption{ROC (left) and precision--recall (right) at $c=60$. Class balance is 1:4; the dotted
line in the right panel marks the positive rate.}
\end{figure}

\subsection{Raw divergence}
\begin{figure}[H]\centering
\includegraphics{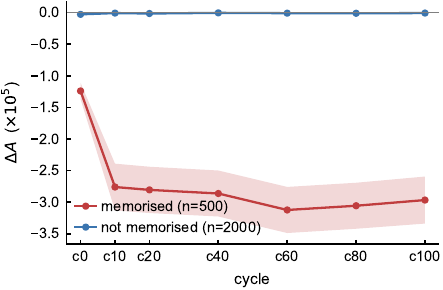}
\caption{Raw $\Delta A$ against cycle, population means with 95\% intervals. Memorised prompts
move from $-1.24\times10^{5}$ at $c=0$ to $-3.12\times10^{5}$ at $c=60$; non-memorised prompts
remain within $0.03\times10^{5}$ of zero.}
\end{figure}

\subsection{By negative source}
\begin{table}[H]\centering\small
\begin{tabular}{lrcccc}
\toprule
& & \multicolumn{2}{c}{$\Delta A$, $c=60$} & \multicolumn{2}{c}{Wen SDN, single shot}\\
\cmidrule(lr){3-4}\cmidrule(lr){5-6}
negatives & $n$ & AUC & TPR@1\%FPR & AUC & TPR@1\%FPR\\
\midrule
laion & 500 & 0.942 & 0.820 & 0.967 & 0.740 \\
coco & 500 & 0.945 & 0.884 & 0.996 & 0.950 \\
lexica & 500 & 0.945 & 0.892 & 0.997 & 0.976 \\
random & 500 & 0.944 & 0.876 & 1.000 & 0.998 \\
\bottomrule
\end{tabular}
\caption{Each row scores the 500 positives against one negative source.}
\end{table}

\end{document}